%% file: Pethick_AccSODA_2026.tex
\documentclass{article}

\usepackage[pagebackref]{hyperref}
\usepackage[dvipsnames]{xcolor}
 \hypersetup{
    colorlinks=true,
    linkcolor=MidnightBlue,
    citecolor=MidnightBlue
 }
\renewcommand*{\backref}[1]{}
\renewcommand*{\backrefalt}[4]{%
  \ifcase #1
  \or Cited on page~#2.%
  \else Cited on pages~#2.%
  \fi
}

\PassOptionsToPackage{square}{natbib}

\usepackage[preprint]{neurips_2026}

\usepackage[utf8]{inputenc} %
\usepackage[T1]{fontenc}    %
\usepackage{url}            %
\usepackage{booktabs}       %
\usepackage{amsfonts}       %
\usepackage{nicefrac}       %
\usepackage{microtype}      %

\usepackage{amsmath}
\usepackage{amssymb}
\usepackage{mathtools}
\usepackage{amsthm}

\input{preamble}

\title{When to use what Schatten-$p$ norm in deep learning?}

\author{%
  Thomas Pethick \\
  \texttt{tmpethick@gmail.com}
}

\begin{document}

\maketitle

\begin{abstract}
Schatten-$\infty$ based optimizers such as Muon have shown promising empirical performance, but there remains seemingly conflicting observations regarding whether they are beneficial.
We resolve this conflict by showing that the conclusion is regime dependent.
Even when the objective is smooth in the Schatten-$\infty$ geometry, smaller Schatten-$p$ geometries can be optimal, specifically in the low-dimensional regime, which we show includes Chinchilla scaling.
This conclusion follows from a new noise-robust acceleration result for the SODA framework for $p>2$.
The same analysis explains why Muon-like methods do not require warmup, why they naturally favor large batches, and yields a batch size scaling rule for arbitrary $p$.
\end{abstract}

\input{body}

\end{document}

%% file: preamble.tex
\usepackage{enumitem}

\usepackage[capitalize,noabbrev]{cleveref}

\usepackage{crossreftools}
\usepackage{xargs} 
\usepackage{xstring}
\usepackage{etoolbox}

\usepackage{aliascnt}

\newtheorem{thm}{Theorem}[section]
\newaliascnt{cor}{thm}
\newtheorem{cor}[cor]{Corollary}
\aliascntresetthe{cor}
\newaliascnt{lemma}{thm}
\newtheorem{lemma}[lemma]{Lemma}
\aliascntresetthe{lemma}
\newaliascnt{fact}{thm}

\aliascntresetthe{fact}
\newaliascnt{prop}{thm}

\aliascntresetthe{prop}

\newtheorem{definition}{Definition}
\newtheorem{assumption}{Assumption}

\theoremstyle{remark}
\newtheorem{remark}{Remark}

\crefname{thm}{theorem}{theorems}
\crefname{definition}{definition}{definitions}
\Crefname{definition}{Definition}{Definitions}
\crefname{assumption}{assumption}{assumptions}
\Crefname{assumption}{Assumption}{Assumptions}
\crefname{cor}{corollary}{corollaries}
\Crefname{cor}{Corollary}{Corollaries}
\crefname{prop}{proposition}{propositions}
\Crefname{prop}{Proposition}{Propositions}
\crefname{lemma}{lemma}{lemmas}
\Crefname{lemma}{Lemma}{Lemmas}
\crefname{fact}{fact}{facts}
\crefname{remark}{remark}{remarks}
\Crefname{remark}{Remark}{Remarks}

\usepackage{mdframed}
\newmdtheoremenv{algo}{Algorithm}
\newmdtheoremenv{procedure}{Decision process}

\newlist{assnum}{enumerate}{1} %
\setlist[assnum]{label=(\roman*), ref=\theassumption(\roman*)}
\crefalias{assnumi}{assumption} 

\newlist{lemnum}{enumerate}{1} %
\setlist[lemnum]{label=(\roman*), ref=\thelemma(\roman*)}
\crefalias{lemnumi}{lemma} 

\newlist{thmnum}{enumerate}{1} %
\setlist[thmnum]{label=(\roman*), ref=\thethm(\roman*)}
\crefalias{thmnumi}{thm} 

\newlist{cornum}{enumerate}{1} %
\setlist[cornum]{label=(\roman*), ref=\thecor(\roman*)}
\crefalias{cornumi}{cor} 

\newlist{definitionnum}{enumerate}{1} %
\setlist[definitionnum]{label=(\roman*), ref=\thedefinition(\roman*)}
\crefalias{definitionnumi}{definition} 

\newlist{propnum}{enumerate}{1} %
\setlist[propnum]{label=(\roman*), ref=\theproposition(\roman*)}
\crefalias{propnumi}{prop}

\newlist{examplenum}{enumerate}{1} %
\setlist[examplenum]{label=(\roman*), ref=\theexample(\roman*)}
\crefalias{examplenumi}{example} 

\newcommand\numberthis{\addtocounter{equation}{1}\tag{\theequation}}		%

\usepackage{dsfont}
\usepackage{nicefrac}

\DeclareMathOperator*{\argmin}{arg\,min}

\newcommand{\R}{\mathbb{R}}

\newcommand{\lmo}{\operatorname{lmo}}

\newcommand{\norm}[1]{\left\Vert{#1}\right\Vert}

\newcommand{\abs}[1]{\left\vert{#1}\right\vert}

\usepackage{tcolorbox}
\newtcolorbox{defbox}{colback=black!5!white,colframe=black!75!black}
\newtcolorbox{asmbox}{colback=black!5!white,colframe=black!75!black}
\newtcolorbox{thmbox}{colback=red!5!white,colframe=red!75!black}

\usepackage{braket}

\newcommandx{\QC}[2][1={},2={}]{\ifstrempty{#1}{Q#2}{Q_{#1}\ifstrempty{#2}{}{(#2)}}}
\newcommandx{\PC}[2][1={},2={}]{\ifstrempty{#1}{P#2}{P_{#1}\ifstrempty{#2}{}{(#2)}}}
\newcommandx{\HC}[2][1={},2={}]{\ifstrempty{#1}{H#2}{H_{#1}\ifstrempty{#2}{}{(#2)}}}
\newcommandx{\MC}[2][1={},2={}]{\ifstrempty{#1}{M#2}{M_{#1}\ifstrempty{#2}{}{(#2)}}}

\newcommandx{\EF}[2][1={k},2={}]{\mathbb E\ifstrempty{#1}{}{_{#1}}#2}

\usepackage{algorithm}
\usepackage[noend]{algpseudocode}

\newcommand{\equationbox}[1]{%
\par\noindent\makebox[\linewidth][c]{%
\fbox{%
\begin{minipage}{\dimexpr\linewidth-2\fboxsep-2\fboxrule\relax}
#1
\end{minipage}%
}}%
\par}

\usepackage{adjustbox}

\usepackage{array,multirow,hhline,graphicx}
\usepackage{subcaption}
\usepackage{wrapfig}
\usepackage{colortbl}

\usepackage{pifont}
\newcommand{\cmark}{\ding{51}}
\newcommand{\xmark}{\ding{55}}

\usepackage[para,online,flushleft]{threeparttable}

%% file: body.tex
\section{Introduction}\label{sec:intro}
\input{Sections/Introduction}

\section{Method}\label{sec:method}
\input{Sections/Method}

\section{Analysis}\label{sec:analysis}
\input{Sections/AnalysisV2}

\section{Related work}\label{sec:related-work}
\input{Sections/RelatedWork}

\section{Conclusion}\label{sec:conclusion}
\input{Sections/Conclusion}

\section{Acknowledgments}\label{sec:acknowledgments}
\input{Sections/Acknowledgments}

\bibliographystyle{plainnat-hiddenlinks}
\bibliography{refs2.bib,lions-master.bib}

\clearpage
\appendix
\section{Proofs}\label{app:proofs}
\input{Sections/AnalysisV2Appendix}

%% file: Sections/Introduction.tex
Schatten-$\infty$ based optimizers such as Muon \citep{jordan2024muon} have seen substantial adoption in the machine learning community.
However, there remain seemingly conflicting observations around when, and whether, Muon is beneficial.

One prevailing consensus is that the batch size needs to be taken large to see the benefits of the Schatten-$\infty$ norm.
This was the case in the original spectral methods \citep{carlson2016stochastic,carlson2015preconditioned}, later in the CIFAR-10 speedrun setting where Muon was developed \citep{jordan202494}, and further documented in \citet{pethick2025trainingdeeplearningmodels,shah2025practical}.
It also seems that Muon is particularly beneficial for short training runs, such as speedruns on modded-nanogpt \citep{modded_nanogpt_2024}.

However, for large-scale training runs, where the token budget is also much larger, the verdict is less clear as shown in large-scale benchmark studies such as \citet{wen2025fantastic,semenov2025benchmarking}.
Even SGD has been shown to outperform common training methods like Adam, specifically in the extremely low batch size regime \citep{sreckovic2025your,marek2026small}.

Additionally, HTMuon \citep{pang2026htmuon} and Soft-Muon \citep{nilin2024contramuon}, which instead use finite Schatten-$p$ norms, have respectively led to a speedup on Llama pre-training and to the leading optimizer-track result on modded-nanogpt at the time of writing.
Similarly, \citet{shumaylov2026muon} dispute the need for exclusively using the Schatten-$\infty$ norm.

Essentially, the mechanism behind Muon's success, and the regime in which it should be expected to be beneficial, is not fully understood.
This raises the following more general question:
\begin{center}
\emph{What Schatten-$p$ norm should the optimizer use, and when?}
\end{center}
To answer this question, we build on the SODA framework of \citet{pethick2026optimistic}, which combines optimistic dual averaging \citep{rakhlin2013online} with the schedule-free optimization framework of \citet{defazio2024road}.
This lets us capture modern optimizer components precisely, including dualization, Nesterov momentum and weight decay.

From this perspective, we identify the possible source of the confusion.
The choice of $p$ is simultaneously tied to both the heavy-tailedness of the stochastic gradient and the geometry of the problem, \emph{and} it affects the degree to which the method can accelerate.
We find that \emph{even when the objective is smooth in the Schatten-$\infty$ geometry, it can be beneficial to use an optimizer with a smaller Schatten-$p$ norm in the low-dimensional, or ``overtraining,'' regime.} Thus, conclusions will be regime dependent.

Throughout, we use ``$p$-norm'' to refer to either a vector $\ell_p$ norm or a Schatten-$p$ norm, depending on the ambient space.
The first challenge we face is that existing analysis of SODA does not allow us to capture $p$-norm geometries for $p>2$.
A major contribution is thus to extend the analysis of SODA to these $p$-norm geometries by considering $p$-uniformly convex reference functions.

\textbf{Contributions.} Concretely, we make the following contributions:
\begin{enumerate}[label=(\roman*)]
    \item We prove that \ref{eq:SODA} achieves accelerated rates for optimization over $p$-norm geometries while remaining robust to stochastic noise.
    In particular, we obtain the \emph{dimension-free} rate
    \[
        O\!\left(
        \frac{L_p R_p^2}{n^{1+2/p}}
        +
        \frac{\sigma_q R_p}{n^{1/p}}
        \right),
    \]
    revealing a connection between the choice of geometry $p$ and the heaviness of the noise tails through the dual exponent $q=p/(p-1)$.
    This appears to be the first noise-robust acceleration result for arbitrary $p$-norm geometries.
    The result extends to the H\"older-smooth case and interestingly yields a $p$-dependent stepsize schedule.
    \looseness=-1

    \item We derive stochastic gradient oracle complexity bounds under possibly lighter-tailed noise by using large batch sizes.
    In the extreme case of bounded variance, the result shows that after $N$ oracle calls, Euclidean methods recover the classical $O(N^{-1/2})$ stochastic error rate, whereas $p\rightarrow \infty$ gives $O(N^{-1/3})$.
    In the process, we develop a norm-dependent batch size scaling rule that increases with $p$ as $B_p\asymp N^{2(p-1)/(3p-2)}$.

    \item We interpret these results and show that smoothness in the $\infty$-norm does \emph{not} always imply that the $\infty$-geometry is optimal for the algorithm.
    Lower values of $p$ can be preferable both in deterministic settings, due to stronger acceleration, and under heavy-tailed noise, due to better alignment with the noise assumption.
    This advantage appears in the low-dimensional regime, where the effective dimension $d$ is smaller than the oracle budget $N$, i.e., $d\lesssim N$.
    This regime dependency suggests a switching strategy of $p$ during training, which we discuss.

    \item We connect the low-dimensional regime with modern scaling rules, illustrating that common scaling rules in fact place training well within the low-dimensional regime by scaling the token budget $N$ as $d^{3}$ where $d$ is the effective dimension of the model matrices.
    This might explain why Muon is not necessarily favored under typical scaling recipes.
\end{enumerate}

\begin{table*}[!t]
\centering
\caption{Instances of \ref{eq:SODA} obtained by various choices
of $p$-norm. For
Schatten norms, write $Y=U\operatorname{diag}(s)V^\top$. Optimism means
$\bar\alpha_k\ne0$ with Nesterov momentum corresponding to $\bar\alpha_k=\alpha_k$.}
\label{tbl:soda}
\begin{threeparttable}
\small
\setlength{\tabcolsep}{3pt}
\begin{tabular*}{\textwidth}{@{\extracolsep{\fill}}>{\raggedright\arraybackslash}p{0.16\textwidth}
>{\raggedright\arraybackslash}p{0.22\textwidth}
>{\centering\arraybackslash}p{0.105\textwidth}
>{\raggedright\arraybackslash}p{0.13\textwidth}
>{\raggedright\arraybackslash}p{0.265\textwidth}@{}}
\toprule
\textbf{Method} & \textbf{Reference} & \textbf{Optimism} &
\textbf{Norm} &
\textbf{Dual map ($\partial h^*/\nabla h^*$)} \\
\midrule
Lion &
\citep{chen2023symbolic} &
\cmark &
$\ell_\infty$ &
$\operatorname{sign}(u)$ \\
 &
-- &
&
$\ell_p$ &
$\operatorname{sign}(u)\odot\abs{u}^{q-1}$ \\
\midrule
Muon &
\citep{jordan2024muon} &
\cmark &
Schatten-$\infty$ &
$UV^\top$ \\
HTMuon\tnote{1} &
\citep{pang2026htmuon} &
\cmark &
Schatten-$p$ &
$U\operatorname{diag}(s^{q-1})V^\top$ \\
\midrule
Scion ({\footnotesize RowNorm}) &
\citep{pethick2025trainingdeeplearningmodels} &
\xmark &
$2\to\infty$ &
$Y_i/\norm{Y_i}_2$ \\
 &
-- &
&
mixed $\ell_{2,p}$ &
$\norm{Y_i}_2^{q-2}Y_i$ \\
\bottomrule
\end{tabular*}
\begin{tablenotes}
\item[1] The same finite-Schatten norm is used in Soft-Muon and Freon
\citep{nilin2024contramuon,shumaylov2026muon}.
\end{tablenotes}
\end{threeparttable}
\end{table*}

%% file: Sections/Method.tex
Let $\mathcal X$ be a finite-dimensional normed space with dual $\mathcal X^*$.
We rely on the Fenchel conjugate to map gradient information back to the primal space.
For a proper closed convex reference function $h:\mathcal X\to\R\cup\{\infty\}$, the Fenchel conjugate over $\mathcal X$ is defined as,
\[
h^*(m)=\sup_{x\in\mathcal X}\{\braket{m,x}-h(x)\},
\qquad m\in\mathcal X^*.
\]

We analyze the following algorithm, introduced in \citet{pethick2026optimistic}, which generalizes Optimistic Dual Averaging (ODA) by introducing a primal extrapolation sequence ($y^k$) from \citet{tseng2008accelerated,lan2012optimal,defazio2024road}:
\begin{equation*}\label{eq:SODA}
\tag{SODA}
\begin{split}
m^{k+1} &= (1-\alpha_k) m^k + \alpha_k \nabla f(y^k,\xi_k), \\
\bar{m}^{k+1} &= (1-\bar{\alpha}_k) m^{k+1}
    + \bar{\alpha}_k \nabla f(y^k, \xi_k), \\
z^{k+1} &\in \partial h^*(-\gamma_k\bar{m}^{k+1})
    = \argmin_{x \in \mathcal X}
    \gamma_k\braket{\bar{m}^{k+1},x} + h(x), \\
x^{k+1} &= (1-\lambda_k)x^k + \lambda_k z^{k+1}, \\
y^{k+1} &= (1-\bar\lambda_k)x^{k+1}+\bar\lambda_k z^{k+1}.
\end{split}
\end{equation*}
Here $\alpha_k,\bar\alpha_k,\lambda_k,\bar\lambda_k\in[0,1]$ and $\gamma_k>0$.
We initialize $m^0=0$ and $x^0=y^0=z^0\in\partial h^*(0)$.
For $\bar\lambda_k=0$ we obtain an optimistic version of the Double Averaging \citep{nesterov2015quasi}.

What is particularly attractive about this algorithmic template in the context of deep learning is that it allows us to capture commonly used techniques such as Nesterov momentum ($\bar m^{k+1}$), weight decay (parameterized by $\lambda_k$ when $\bar\lambda_k=0$) and the use of different geometries ($\partial h^*$), all in a way where their impact can be understood theoretically.

\subsection{Norm choices}

Previous analyses of SODA were based on strongly convex reference functions, which naturally cover $p$-norm geometries for $p\in(1,2]$.
We instead use uniform convexity, relying on the fact that $h(x)=\frac{1}{p}\|x\|_p^p$ is $p$-uniformly convex (see \Cref{def:uniform-convexity}).
At the endpoint $p=\infty$, this regularizer becomes a hard $\infty$-norm constraint, thus recovering the Schatten-$\infty$ linear minimization oracle ($\lmo$) used in e.g., Muon \citep{jordan2024muon} and Scion \citep{pethick2025trainingdeeplearningmodels}.

\paragraph{Vector $\ell_p$ geometry}
For vector variables $x\in\mathbb R^d$, choose the fixed reference function centered at the initial point $z^0$,
\[
h(x)=\tfrac1p\norm{x-z^0}_{\ell_p}^p,
\qquad p>2,
\qquad q=\tfrac{p}{p-1}.
\]
Then $h^*(u)=\braket{u,z^0}+\tfrac1q\norm{u}_{\ell_q}^q$, and the step in \ref{eq:SODA} is explicit:
\[
z^{k+1}
=
\nabla h^*(-\gamma_k\bar m^{k+1})
=
z^0
-\gamma_k^{q-1}
\operatorname{sign}(\bar m^{k+1})
\odot \abs{\bar m^{k+1}}^{q-1},
\]
where the sign, absolute value, power, and product are taken componentwise.
When $z^0=0$, this dual map interpolates between the identity map, $\nabla h^*(g)=g$, at the Euclidean endpoint $p=2$ and the sign map at $p= \infty$.
This is different from, for example, the linear minimization oracle in Frank-Wolfe, which interpolates between \emph{normalized} gradients and the sign map.

\paragraph{Schatten-$p$ geometry}
For matrix variables $X\in\mathbb R^{d_1\times d_2}$, choose instead
\[
h(X)=\tfrac1p\norm{X-Z^0}_{S_p}^p
=\tfrac1p \textstyle \sum_i \sigma_i(X-Z^0)^p,
\qquad p>2,
\qquad q=\tfrac{p}{p-1},
\]
where $\norm{\cdot}_{S_p}$ is the Schatten-$p$ norm.
Then $h^*(Y)=\braket{Y,Z^0}+\tfrac1q\norm{Y}_{S_q}^q$.
If $\bar M^{k+1}=U\operatorname{diag}(s)V^\top$ is a singular value decomposition (SVD), then
\[
Z^{k+1}
=
\nabla h^*(-\gamma_k\bar M^{k+1})
=
Z^0
-\gamma_k^{q-1}U\operatorname{diag}(s^{q-1})V^\top .
\]

This is the dualization used in HTMuon \citep{pang2026htmuon}, Freon \citep{shumaylov2026muon} and Soft-Muon \citep{nilin2024contramuon}, with the only difference being that $Z^0$ can be taken different from zero and the stepsize $\gamma_k$ is exponentiated as $q-1$.
The SVD can be computed efficiently using QDWH \citep{nakatsukasa2010optimizing,nakatsukasa2016computing} used in Freon or (warmstarted) power iteration as used in PowerSGD and Dion \citep{vogels2019powersgd,ahn2025dion}.

\paragraph{Spectral geometry}
At the endpoint $p=\infty$, the update rule becomes the linear minimization oracle over the Schatten-$\infty$/spectral norm ball centered around $Z^0$,
\[
Z^{k+1}=Z^0-UV^\top,
\]
for which \ref{eq:SODA} exactly recovers Muon \citep{jordan2024muon} with Nesterov momentum and weight decay by taking $\bar\lambda_k=0$.

%% file: Sections/AnalysisV2.tex
We derive convergence guarantees for \ref{eq:SODA} under uniform convexity in order to generalize to $p$-norm for $p>2$.
The proof is based on an online regret argument, where we let $g^k$ denote an arbitrary gradient-feedback sequence.
In the stochastic optimization setting of \ref{eq:SODA}, we take $g^k := \nabla f(y^k,\xi_k)$.

\begin{definition}[$p$-uniform convexity]
\label{def:uniform-convexity}
For $p\ge2$, a proper closed convex function $h$ is $p$-uniformly convex with constant $\mu>0$ with respect to $\norm{\cdot}$ if
\[
D_h(u,v;s):=h(u)-h(v)-\braket{s,u-v}
\ge \tfrac{\mu}{p}\norm{u-v}^p
\qquad
\text{for all }u,v\in\mathcal X,\ s\in\partial h(v).
\]
The conjugate exponent is denoted by $q=p/(p-1)$.
The case $p=2$ is the usual strong convexity.
\end{definition}
\begin{assumption}[Convex]
\label{ass:convex}
The objective $f$ is convex.
\end{assumption}

\begin{assumption}[$L$-smooth]
\label{ass:smooth}
The function $f$ is $L$-smooth with respect to $\norm{\cdot}$.
\end{assumption}
\vspace{1mm}
\begin{remark}[Meaning of $\norm{\cdot}_p$]
\label{rem:p-norm-convention}
The general statements are formulated for an arbitrary norm $\norm{\cdot}$.
When the corollaries specialize to $\norm{\cdot}_p$, this notation covers both the vector norm $\norm{\cdot}_{\ell_p}$ and the Schatten norm $\norm{\cdot}_{S_p}$, with dual norm $\norm{\cdot}_q$ interpreted in the same ambient space.
We write $L_p$ for the corresponding smoothness constant.
The specialization is valid because, for finite $p\ge2$, $h(u)=\tfrac1p\norm{u-z^0}_p^p$ is $p$-uniformly convex with respect to either geometry with admissible constant $\mu=2^{2-p}$.
See \Cref{app:additional-p-uniform-norms} for other norms.
\end{remark}
\vspace{1mm}

\begin{assumption}[Unbiased]
\label{ass:soda-stochastic}
Let $\mathcal F_k$ be the natural filtration.
The gradients satisfy
\[
\mathbb E[g^k \mid \mathcal F_{k-1}] = \nabla f(y^k).
\]
\end{assumption}

We rely on the following heavy-tailed type noise assumption which arises naturally in the analysis since $p$-uniform convexity of $h$ implies that $h^*$ is $q$-uniformly smooth.
\begin{assumption}[Gradient variation]
\label{ass:noise}
For some $q\in(1,2]$, the gradients satisfy, for $k\ge0$, $\tau_q\ge1$, and $\sigma_q\ge0$,
\[
\mathbb E\!\left[\norm{g^k-g^{k-1}}_*^q\right]^{1/q}
\le
\tau_q
\mathbb E\!\left[
\norm{\nabla f(y^k)-\nabla f(y^{k-1})}_*^q
\right]^{1/q}
+ \sigma_q .
\]
\end{assumption}
\begin{remark}\label{rem:gradvar}
For $k\ge1$, \Cref{ass:noise} is satisfied under the heavy-tailed noise assumption,
\[
\mathbb E[\norm{\nabla f(y^k,\xi_k)-\nabla f(y^k)}_*^q\mid \mathcal F_{k-1}]
\le \bar\sigma_q^q,
\]
with $\tau_q=1$ and $\sigma_q=2\bar\sigma_q$ due to Minkowski's inequality.
\end{remark}
We are now ready to state our first main convergence result.
\begin{cor}[Non-accelerated parameterization]
\label{cor:soda-nonacc-uniform}
Let $p\ge2$ and let $q=p/(p-1)$.
Let $x^\star\in\argmin_{x\in\operatorname{dom}h} f(x)$ and $D_\star:=h(x^\star)-\inf h$.
Consider \ref{eq:SODA}.
For every $k=0,\dots,n-1$, choose
\[
\alpha_k = \tfrac{1}{k+1},
\qquad
\bar\alpha_k = \lambda_k = \tfrac{1}{k+2},
\qquad
\bar\lambda_k \le \tfrac{\lambda_k}{6},
\qquad
\gamma_k = \eta(k+2).
\]
Suppose \Cref{ass:convex,ass:soda-stochastic,ass:smooth,ass:noise} hold with $q=p/(p-1)$.
Assume also that $h$ is $p$-uniformly convex with constant $\mu$ in the sense of \Cref{def:uniform-convexity}.
Choose
\[
\eta
\asymp_{p,\tau_q}
\min\left\{
\tfrac{\mu^{2/p}D_\star^{(p-2)/p}}{L},
\tfrac{\mu^{1/p}D_\star^{(p-1)/p}}
{\sigma_q\,n^{(p-1)/p}}
\right\}.
\]
Then, for every $n\ge1$,
\[
\mathbb E[f(x^{n-1})-f(x^\star)]
=
O_{p,\tau_q}\!\left(
\tfrac{L D_\star^{2/p}}{\mu^{2/p}n}
+
\tfrac{\sigma_q D_\star^{1/p}}{\mu^{1/p}n^{1/p}}
\right).
\]
In particular, for $h(x)=\tfrac1p\norm{x-z^0}_p^p$ and $R_p:=\norm{x^\star-z^0}_p$, taking $\mu=2^{2-p}$ gives
\[
\mathbb E[f(x^{n-1})-f(x^\star)]
=
O_{p,\tau_q}\!\left(
\tfrac{L_p R_p^2}{n}
+
\tfrac{\sigma_q R_p}{n^{1/p}}
\right).
\]
\end{cor}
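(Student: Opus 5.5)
This corollary is labeled as a corollary, presumably of a general regret/convergence theorem stated later in the appendix. Since no such theorem appears above, I plan to prove it directly through an online-to-batch argument for optimistic dual averaging with anytime averaging. The approach has five steps.

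\emph{Step 1: reduction to weighted regret.} With $\lambda_k=\tfrac1{k+2}$, the sequence $x^{k+1}$ is the uniform average of $z^0,\dots,z^{k+1}$. Writing $w_k=k+1$, I use $y^k$ as the query point (the schedule-free device of \citet{defazio2024road}) together with convexity and $L$-smoothness. The $\bar\lambda_k$ extrapolation introduces a gap between $y^k$ and $x^k$; this gap is controlled by $\bar\lambda_k\le\lambda_k/6$ and absorbed through smoothness. The result is a bound of the form
\[
(n)\,\big(f(x^{n-1})-f(x^\star)\big)\lesssim \sum_k \braket{g^k, z^{k+1}-x^\star}\cdot(\text{weights}) + \text{noise martingale terms}.
\]
Unbiasedness (\Cref{ass:soda-stochastic}) kills the cross terms $\braket{g^k-\nabla f(y^k),x^\star}$ in expectation.

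\emph{Step 2: rewrite the updates as optimistic dual averaging.} With $\alpha_k=\tfrac1{k+1}$, the scaled quantity $\gamma_k m^{k+1}$ is, up to constants, $\eta\sum_{j\le k} g^j$. The Nesterov correction $\bar m$ adds an optimistic hint proportional to the last gradient. Thus $z^{k+1}=\nabla h^*(-\eta(\sum_{j\le k}g^j + c\,g^k))$, which is ODA with the hint $g^k$ used to predict $g^{k+1}$.

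\emph{Step 3: the uniform-convexity regret lemma.} The standard ODA telescoping with Bregman terms gives
\[
\text{Regret}(x^\star)\le \frac{D_\star}{\eta} + \sum_k\Big(\braket{g^{k+1}-g^k, z^{k+1}-z^{k+2}} - \tfrac{\mu}{p\eta}\norm{z^{k+1}-z^{k+2}}^p\Big).
\]
Applying Young's inequality with exponents $(q,p)$ to each summand bounds it by $\tfrac{c_p\,\eta^{q-1}}{\mu^{q-1}}\norm{g^{k+1}-g^k}_*^q$. This is where $p>2$ departs from the strongly convex analysis: the penalty is a $q$-th power rather than a square.

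\emph{Step 4: handling the gradient variation.} I apply \Cref{ass:noise} and split $\norm{g^{k+1}-g^k}_*^q$ into a deterministic part $\tau_q^q\norm{\nabla f(y^{k+1})-\nabla f(y^k)}_*^q$ and a noise part $\sigma_q^q$.

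For the deterministic part, smoothness gives $\norm{\nabla f(y^{k+1})-\nabla f(y^k)}_*\le L\norm{y^{k+1}-y^k}$. Because of the averaging, $y^{k+1}-y^k$ is $O(\tfrac1k)\norm{z^{k+2}-z^{k+1}}$, up to the $\bar\lambda$ corrections. The resulting term $L^q\norm{\Delta z}^q/k^q$ must be absorbed by a leftover negative term $-\tfrac{\mu}{2p\eta}\norm{\Delta z}^p$, which I retain by taking only half of it in the Young step. Since $q<p$, these powers mismatch. I handle this with a further Young inequality, trading $\norm{\Delta z}^q$ against $\norm{\Delta z}^p$ plus a constant. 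That constant is then controlled using the diameter-type bound $\norm{z-x^\star}^p\lesssim D_\star/\mu$, which follows from uniform convexity because dual-averaging iterates stay in the $D_\star$ sublevel region in expectation. The outcome is the condition $\eta\lesssim \mu^{2/p}D_\star^{(p-2)/p}/L$, and the deterministic error contributes $L D_\star^{2/p}/(\mu^{2/p}n)$.

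I expect this absorption step to be the main obstacle. It is exactly where the non-accelerated $1/n$ rate appears instead of acceleration, and the $D_\star^{(p-2)/p}$ factor comes from the mismatched exponents.

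\emph{Step 5: balancing.} The noise part contributes $\eta^{q-1}\mu^{1-q}\sigma_q^q\, n$ to the regret, and the bias term is $D_\star/\eta$. After dividing by the total weight, which is of order $n$ with $\gamma_k$ growing linearly, I minimize over $\eta$. This yields the second branch of $\eta$ and the rate $\sigma_q D_\star^{1/p}\mu^{-1/p}n^{-1/p}$. Taking the minimum with the smoothness condition gives the stated bound.

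The specialization then follows from \Cref{rem:p-norm-convention}: $D_\star=R_p^p/p$ and $\mu=2^{2-p}$, so that $D_\star^{2/p}\mu^{-2/p}\asymp_p R_p^2$ and $D_\star^{1/p}\mu^{-1/p}\asymp_p R_p$.
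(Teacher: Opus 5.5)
There is a genuine gap in your Step~4, which is the only place where the deterministic variation term $\tau_q^q\sum_k\norm{\nabla f(y^{k+1})-\nabla f(y^k)}_*^q$ gets controlled. The premise that averaging makes $y^{k+1}-y^k$ of order $\tfrac1k\norm{z^{k+2}-z^{k+1}}$ is false. The updates give
\[
y^{k+1}-y^k=\lambda_k(z^{k+1}-x^k)+\bar\lambda_k(z^{k+1}-x^{k+1})-\bar\lambda_{k-1}(z^k-x^k).
\]
So the increment is $O(\tfrac1k)$ times the distance from a dual point to the running average. That distance need not be small when consecutive dual points are close: if $z^{k+2}=z^{k+1}$, your right-hand side vanishes, but $\lambda_k(z^{k+1}-x^k)$ need not. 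Hence the ODA stability terms $-\tfrac{\mu}{p\eta}\norm{\Delta z}^p$ cannot absorb $L^q\norm{y^{k+1}-y^k}^q$. Had the premise held, your second Young step would leave a remainder $\propto\eta^{2/(p-2)}L^{p/(p-2)}\mu^{-2/(p-2)}k^{-p/(p-2)}$ that needs no diameter, so the two mechanisms in Step~4 are not combined consistently.

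Your fallback, $\mathbb E\norm{z^k-x^\star}^p\lesssim D_\star/\mu$ uniformly in $k$, is asserted without proof. It is not an automatic property of unconstrained stochastic optimistic dual averaging. Granting it, the crude bound $\norm{y^{k+1}-y^k}\lesssim\tfrac1k(D_\star/\mu)^{1/p}$ together with $\sum_k k^{-q}=O_p(1)$ would indeed reproduce $\eta\lesssim\mu^{2/p}D_\star^{(p-2)/p}/(\tau_qL)$. Proving it, however, would require retaining the terminal Bregman term $D_h(x^\star,\nabla h^*(\theta^k))$ in the regret bound and closing an induction over $k$, including control of the optimistic shift $\eta a_kg^{k-1}$ under only \Cref{ass:noise}. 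None of this is in the proposal.

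Separately, the pairing in Step~1 should be $\braket{g^k,z^k-x^\star}$, with $z^k$ being $\mathcal F_{k-1}$-measurable, as your own Step~2 indexing implies. As written, the cross term $\mathbb E\braket{g^k-\nabla f(y^k),z^{k+1}}$ does not vanish.

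The missing idea is to keep, rather than absorb, the nonpositive Bregman terms of the anytime online-to-batch identity. The paper retains $-\tfrac{a_k}{\bar\lambda_{k-1}}D_f(y^k,x^k)-A_{k-1}D_f(x^{k-1},x^k)$ and lower-bounds them by co-coercivity, $D_f(u,v)\ge\tfrac1{2L}\norm{\nabla f(u)-\nabla f(v)}_*^2$. A triangle inequality through $x^k,x^{k-1}$ then gives $-\tfrac1{6L}\sum_kA_{k-1}\norm{\nabla f(y^k)-\nabla f(y^{k-1})}_*^2$ plus a telescoping remainder. This is the actual role of $\bar\lambda_k\le\lambda_k/6$: it ensures $a_k/\bar\lambda_{k-1}\ge 6A_k$.

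The resulting negative term involves the same scalar $s_k=\norm{\nabla f(y^k)-\nabla f(y^{k-1})}_*$ as the positive term $\tfrac{\tau_q^q\eta^{q-1}}{\mu^{q-1}}s_k^q$ from the regret lemma. So for $p>2$ the pointwise bound $\sup_{s\ge0}\{As^q-Bs^2\}\lesssim_p A^{2/(2-q)}B^{-q/(2-q)}$ with $B=k/L$ applies. It leaves $O_p\bigl(\tau_q^{2p/(p-2)}\eta^{2/(p-2)}L^{p/(p-2)}\mu^{-2/(p-2)}k^{-p/(p-2)}\bigr)$, which is summable because $p/(p-2)>1$. Balancing this against $D_\star/\eta$ gives the stated deterministic branch $\eta\asymp\mu^{2/p}D_\star^{(p-2)/p}/L$ and the term $LD_\star^{2/p}/(\mu^{2/p}n)$.

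No iterate bound is needed except at $k=0$, where $\norm{z^0-x^\star}^p\le pD_\star/\mu$ because $0\in\partial h(z^0)$. Your Steps~2, 3 and~5 are essentially the paper's regret lemma and stochastic balancing; Step~4 should be replaced by the argument above.
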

For $p>2$, this follows from the more general \Cref{thm:soda-nonacc-holder-uniform}, which allows H\"older smoothness, by taking $\nu=1$ and $L_\nu=L$, while $p=2$ is the strongly convex SODA guarantee of \citet[Cor.~4.6]{pethick2026optimistic}.
\Cref{cor:soda-near-linf} in the appendix shows that the $\infty$-geometry can be approximated by $p=O(\log(d))$ with only a logarithmic loss.
For simplicity of presentation, in the remaining discussion involving $p\rightarrow \infty$, we refer to this result for $p=O(\log(d))$ and suppress the logarithmic dependence in the resulting rates and complexities.

\begin{figure}[t]
\centering
\begin{subfigure}[t]{0.49\linewidth}
\centering
\includegraphics[width=\linewidth]{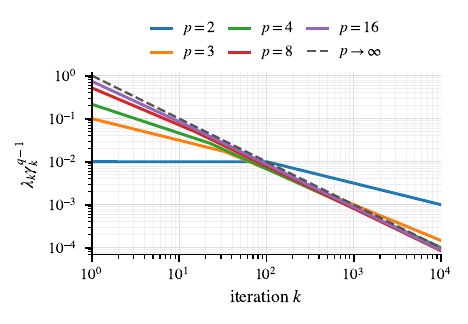}
\caption{Non-accelerated stepsize (\Cref{cor:soda-nonacc-uniform}).}
\label{fig:nonaccelerated-stepsize-scale}
\end{subfigure}\hfill
\begin{subfigure}[t]{0.49\linewidth}
\centering
\includegraphics[width=\linewidth]{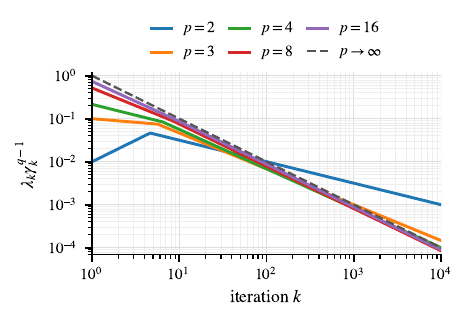}
\caption{Accelerated stepsize (\Cref{cor:soda-acc-uniform}).}
\label{fig:accelerated-stepsize-scale}
\end{subfigure}
\caption{Effective stepsize $\lambda_k\gamma_k^{q-1}$ for several
geometries, using $C_{\rm det}=10^{-2}$ and $C_{\rm stoch}=10^{-1}$.
Interestingly, we find a warmup-like phase for the accelerated parameterization, which is present in the Euclidean case but not for $p\rightarrow \infty$, consistent with practice (although the functional form differs).}
\label{fig:stepsize-scales}
\end{figure}

\paragraph{Effective stepsize}
Let us interpret the stepsize choices in \Cref{cor:soda-nonacc-uniform}.
For the finite-$p$ regularizer $h(x)=\tfrac1p\norm{x-z^0}_p^p$, the update for $z^{k+1}$ in \ref{eq:SODA} contains the stepsize $\gamma_k^{q-1}$.
To see how this stepsize changes with the geometry, consider the horizon-free version, where the horizon $n$ in $\eta$ is replaced by the current iteration $k$:\footnote{We leave explicit treatment of such anytime parameterization for future work.}
\[
\eta_k
\asymp
\min\left\{
C_{\rm det},
C_{\rm stoch}k^{-(p-1)/p}
\right\},
\quad \text{with}\quad
C_{\rm det}:=\tfrac{\mu^{2/p}D_\star^{(p-2)/p}}{L},
\quad
C_{\rm stoch}:=\tfrac{\mu^{1/p}D_\star^{(p-1)/p}}{\sigma_q}.
\]
Since $\gamma_k=\eta_k(k+2)$, this gives
\[
\gamma_k^{q-1}
\asymp
\min\left\{
C_{\rm det}^{1/(p-1)}k^{1/(p-1)},
C_{\rm stoch}^{1/(p-1)}k^{1/(p(p-1))}
\right\}.
\]
For the centered regularizer $h(x)=\tfrac1p\norm{x-z^0}_p^p$, the averaged iterate update in \ref{eq:SODA} reduces to

\equationbox{
\begin{equation*}
x^{k+1}
=(1-\lambda_k)x^k
+\lambda_k z^0
-\lambda_k\gamma_k^{q-1}
\nabla\!\left(\tfrac1q\norm{\cdot}_q^q\right)
(\bar m^{k+1}).
\end{equation*}
}
From this we see clearly that $\lambda_k$ acts as weight decay \citep{hanson1988comparing} while $\lambda_k\gamma_k^{q-1}$ acts as the effective stepsize of the update, which we plot in \Cref{fig:stepsize-scales}.
Specifically, at the endpoints we see the effect of $p$ on the effective stepsize
\[
p=2:\qquad
\lambda_k\gamma_k^{q-1}
\asymp
\min\left\{
C_{\rm det},
C_{\rm stoch}k^{-1/2}
\right\},
\qquad
p\rightarrow\infty:\qquad
\lambda_k\gamma_k^{q-1}
\asymp
k^{-1}.
\]

For a more aggressive hyperparameter choice we can accelerate the smooth term (depending on $p$).
\begin{cor}[Accelerated parameterization]
\label{cor:soda-acc-uniform}
Let $p\ge2$ and let $q=p/(p-1)$.
Let $x^\star\in\argmin_{x\in\operatorname{dom}h} f(x)$ and $D_\star:=h(x^\star)-\inf h$.
Consider \ref{eq:SODA}.
For every $k=0,\dots,n-1$, choose
\begin{align*}
\alpha_k = \tfrac{2}{k+2},
\qquad
\bar\alpha_k = \lambda_k = \tfrac{2}{k+3},
\qquad
\bar\lambda_k \le \tfrac{\lambda_k}{6},
\qquad
\gamma_k = \eta \tfrac{(k+2)(k+3)}{2}.
\end{align*}
Suppose \Cref{ass:convex,ass:soda-stochastic,ass:smooth,ass:noise} hold with $q=p/(p-1)$.
Assume also that $h$ is $p$-uniformly convex with constant $\mu$ in the sense of \Cref{def:uniform-convexity}.
Choose
\[
\eta
\asymp_{p,\tau_q}
\min\left\{
\tfrac{\mu^{2/p}D_\star^{(p-2)/p}}{L\,n^{(p-2)/p}},
\tfrac{\mu^{1/p}D_\star^{(p-1)/p}}
{\sigma_q\,n^{(2p-1)/p}}
\right\}.
\]
Then for every $n\ge1$, we have
\[
\mathbb E[f(x^{n-1})-f(x^\star)]
=
O_{p,\tau_q}\!\left(
\tfrac{L D_\star^{2/p}}{\mu^{2/p}n^{1+2/p}}
+
\tfrac{\sigma_q D_\star^{1/p}}{\mu^{1/p}n^{1/p}}
\right).
\]
In particular, for $h(x)=\tfrac1p\norm{x-z^0}_p^p$ and $R_p:=\norm{x^\star-z^0}_p$, taking $\mu=2^{2-p}$ gives
\[
\mathbb E[f(x^{n-1})-f(x^\star)]
=
O_{p,\tau_q}\!\left(
\tfrac{L_p R_p^2}{n^{1+2/p}}
+
\tfrac{\sigma_q R_p}{n^{1/p}}
\right).
\]
\end{cor}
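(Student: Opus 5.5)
Like \Cref{cor:soda-nonacc-uniform}, I would obtain this as a specialization of a general regret-based bound for \ref{eq:SODA} under $p$-uniform convexity; presumably it is the accelerated analogue of \Cref{thm:soda-nonacc-holder-uniform}, with $\nu=1$ and $L_\nu=L$. The skeleton is the anytime online-to-batch conversion of \citet{defazio2024road}. With weights $a_k$ satisfying $\lambda_k=a_k/\sum_{j\le k}a_j$ (here $a_k\propto k+2$, so $A_k:=\sum_{j\le k}a_j\asymp(k+2)(k+3)/2$), convexity gives
\[
A_{n-1}\,(f(x^{n-1})-f(x^\star))\le \textstyle\sum_k a_k\braket{\nabla f(y^k),z^{k+1}-x^\star}+\text{(extrapolation correction)}.
\]
The correction from $y^k\neq x^k$ is controlled by $\bar\lambda_k\le\lambda_k/6$, which keeps $y^k$ close enough to $x^k$ that it can be absorbed. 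The linear term is then the weighted regret of optimistic dual averaging with hint $\bar m^{k+1}$. Its standard bound is
\[
\frac{h(x^\star)-\inf h}{\gamma}\;+\;\sum_k\Big(\text{stability term}\;-\;\frac{\mu}{p\,\gamma_k}\norm{z^{k+1}-z^{k}}^p\Big).
\]
By $p$-uniform convexity of $h$, $h^*$ is $q$-uniformly smooth, so the stability term scales like $\gamma_k^{q-1}\mu^{-(q-1)}\norm{\text{gradient innovation}}_*^q$. The innovation is the difference between the true feedback and the optimistic hint, and it reduces to $g^k-g^{k-1}$ weighted by the momentum coefficients.

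Next I would split $\norm{g^k-g^{k-1}}_*$ using \Cref{ass:noise} into a deterministic part $\tau_q\norm{\nabla f(y^k)-\nabla f(y^{k-1})}_*\le \tau_q L\norm{y^k-y^{k-1}}$ and a noise part $\sigma_q$. The noise part contributes $\sum_k \gamma_k^{q-1}a_k^q\sigma_q^q/\mu^{q-1}$. With $\gamma_k=\eta(k+2)(k+3)/2$ the dominant terms scale polynomially in $k$, giving roughly $\eta^{q-1}\sigma_q^q n^{\,c}/\mu^{q-1}$ for an explicit exponent $c$. Balancing this against $D_\star/(\eta\cdot\text{weights})$ fixes the $\sigma_q$-branch of $\eta$; after dividing by $A_{n-1}\asymp n^2$ one should recover $\sigma_q D_\star^{1/p}\mu^{-1/p}n^{-1/p}$.

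The main obstacle is the deterministic part. In the Euclidean case $p=2$ the term $L^2\norm{y^k-y^{k-1}}^2$ is cancelled exactly by the negative $\norm{z^{k+1}-z^k}^2$ terms, since $y^k-y^{k-1}$ is a $\lambda$-scaled combination of $z$-differences. For $p>2$ the positive term is $(L\lambda_k\norm{\Delta z})^q$ with $q<2$, while the negative term is $\norm{\Delta z}^p/\gamma_k$, so the exponents do not match and exact cancellation fails. I would use Young's inequality with exponents $p/q=p-1$ and its conjugate $(p-1)/(p-2)$. This absorbs part of the positive term into $\frac{\mu}{2p\gamma_k}\norm{\Delta z}^p$ and leaves a residual of order $(\gamma_k^{q-1}a_k^qL^q\lambda_k^q)^{(p-1)/(p-2)}\gamma_k^{1/(p-2)}$ summed over $k$. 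With the accelerated weights this residual grows polynomially in $n$, and this growth is what forces the extra factor $n^{-(p-2)/p}$ in the $L$-branch of $\eta$. Choosing $\eta\asymp\mu^{2/p}D_\star^{(p-2)/p}/(Ln^{(p-2)/p})$ should make the residual comparable to $D_\star/\eta$, which yields $LD_\star^{2/p}\mu^{-2/p}n^{-1-2/p}$ after dividing by $A_{n-1}$. The bookkeeping of these exponents is where I expect most of the care to be needed.

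Finally I would take expectations, using \Cref{ass:soda-stochastic} to kill the cross terms $\braket{g^k-\nabla f(y^k),z^k-x^\star}$, and Jensen to pass from $q$-th moments. Taking $\eta$ as the minimum of the two branches bounds each contribution by its balanced value and gives the stated rate. For the particular case $h=\tfrac1p\norm{\cdot-z^0}_p^p$, \Cref{rem:p-norm-convention} gives $\mu=2^{2-p}$, $\inf h=0$ and $D_\star=R_p^p/p$. Hence $D_\star^{2/p}\mu^{-2/p}\asymp_p R_p^2$ and $D_\star^{1/p}\mu^{-1/p}\asymp_pR_p$, which gives the final display.
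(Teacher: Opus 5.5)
There is a genuine gap in the step that cancels the deterministic gradient variation. You bound $\tau_q\norm{\nabla f(y^k)-\nabla f(y^{k-1})}_*\le\tau_qL\norm{y^k-y^{k-1}}$ and absorb the result into the regularizer's negative stability term $\norm{z^{k}-z^{k-1}}^p$, claiming that $y^k-y^{k-1}$ is a $\lambda$-scaled $z$-difference. For \ref{eq:SODA} this is false, because the optimistic hint is the gradient at the previous query point $y^{k-1}$.

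Take $\bar\lambda\equiv0$ and weights $a_k=k+1$, so that $x^k=\frac1{A_k}\sum_{i\le k}a_iz^i$. Then
\[
y^k-y^{k-1}=\lambda_{k-1}(z^k-x^{k-1})=\tfrac{\lambda_{k-1}}{A_{k-1}}\textstyle\sum_{j=1}^{k}A_{j-1}(z^j-z^{j-1}).
\]
Under a steady drift $z^j-z^{j-1}\equiv\delta$ this equals exactly $\tfrac23\delta$, not $\lambda_{k-1}\delta\approx\tfrac2k\delta$. The positive term $\tfrac{\eta^{q-1}}{\mu^{q-1}}a_k^qL^q\norm{y^k-y^{k-1}}^q$ therefore keeps an uncompensated factor $k^q$ against $\tfrac{\mu}{p\eta}\norm{\delta}^p$. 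Already for $p=2$ this forces $\eta\lesssim1/(Ln)$, which gives only $LR^2/n$. For $p>2$ your Young residual becomes $\eta^{2/(p-2)}L^{p/(p-2)}\mu^{-2/(p-2)}k^{p/(p-2)}$ per step rather than $O(1)$, so the chain cannot produce the $n^{-1-2/p}$ term.

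The exact Euclidean cancellation you recall holds for schemes that evaluate the hint at a point $\tilde x^k$ with $x^k-\tilde x^k=\lambda_{k-1}(z^k-z^{k-1})$, which costs a second gradient per step. A smaller bookkeeping issue: with regularizer $h/\eta$ acting on the losses $a_kg^k$, the stability term is $\tfrac{\eta^{q-1}}{q\mu^{q-1}}a_k^q\norm{g^k-g^{k-1}}_*^q$. Writing $\gamma_k^{q-1}a_k^q$ counts the weights twice.

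The missing idea is that the negative term must come from the online-to-batch conversion, not from the regularizer. The paper's regret bound, \Cref{lem:oda-regret}, discards the stability term entirely. In the expansion of \citet{defazio2024road}, keep the Bregman terms $-A_{k-1}D_f(x^{k-1},x^k)$ and $-\tfrac{a_k}{\bar\lambda_{k-1}}D_f(y^k,x^k)$, and lower-bound them by co-coercivity, $D_f(u,v)\ge\tfrac1{2L}\norm{\nabla f(u)-\nabla f(v)}_*^2$. This is the actual role of $\bar\lambda_k\le\lambda_k/6$: it gives $a_k/\bar\lambda_{k-1}\ge6A_k$.

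Set $U_k:=\norm{\nabla f(y^k)-\nabla f(x^k)}_*^2$ and $V_k:=\norm{\nabla f(x^k)-\nabla f(x^{k-1})}_*^2$. Combining the bound above with $s_k^2\le3(U_k+V_k+U_{k-1})$, where $s_k:=\norm{\nabla f(y^k)-\nabla f(y^{k-1})}_*$, and telescoping in $U_k$ yields the negative term $-\tfrac1{6L}\sum_kA_{k-1}s_k^2$ (\Cref{lem:soda-otb-holder}).

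Then apply Young's inequality to $s_k$ itself: $\sup_{s\ge0}\{As^q-Bs^2\}\lesssim A^{2/(2-q)}B^{-q/(2-q)}$, with $A\asymp\tau_q^q\eta^{q-1}k^q/\mu^{q-1}$ and $B\asymp k^2/L$. The powers of $k$ cancel exactly, leaving a per-step residual $\asymp_{p,\tau_q}\eta^{2/(p-2)}L^{p/(p-2)}\mu^{-2/(p-2)}$. Summing over $n$ steps and balancing against $D_\star/\eta$ gives exactly the stated $\eta$ and the $n^{-1-2/p}$ rate. Your noise balancing, the expectation step, and the specialization to $\mu=2^{2-p}$ are fine once the weights are corrected.
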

For $p>2$, this follows from the more general \Cref{thm:soda-acc-holder-uniform}, which allows H\"older smoothness, by taking $\nu=1$ and $L_\nu=L$, while $p=2$ is the accelerated strongly convex SODA guarantee of \citet[Cor.~B.6]{pethick2026optimistic}.
For fixed finite $p\ge2$, the deterministic term is tight up to constants depending on $p$ (see \citet[Cor.~1]{guzman2018lower}).
For $p=\infty$, \citet[Cor.~1]{guzman2018lower} only gives tightness up to a logarithmic factor in the dimension, thus matching \Cref{cor:soda-near-linf}.

We can extract a remarkable number of insights from these bounds
as we will see next.

\subsection[Batch size under moment noise]{Batch size under moment noise}

The noise assumption in the convergence theorems is inherently tied to the algorithm norm, since choosing the primal geometry $\norm{\cdot}_p$ fixes the dual norm $\norm{\cdot}_q$, where $q=p/(p-1)$.
This results in an overly pessimistic oracle complexity when a stronger noise assumption is available.

For mini-batching, we therefore strengthen the oracle assumption by requiring an $r$-moment.
Larger $r$ means less heavy-tailed noise, and the extra moment is what allows the effective noise scale to decrease with the batch size.
We write
\[
p_r:=\tfrac{r}{r-1},
\qquad
\tfrac{1}{p_r}=\tfrac{r-1}{r}.
\]

\begin{assumption}[Mean-zero $r$-moment stochastic gradients]
\label{ass:mean-zero-r-moment}
For some $r\in(1,2]$, for every $x\in\mathcal X$,
\[
\mathbb E_\xi[\nabla f(x,\xi)]=\nabla f(x),
\qquad
\mathbb E_\xi\!\left[
\norm{\nabla f(x,\xi)-\nabla f(x)}_r^r
\right]\le \sigma_r^r .
\]
\end{assumption}
This implies the gradient-variation condition in \Cref{ass:noise} for $q=r$ (see \Cref{rem:gradvar}).

Here we keep the stochastic oracle structure explicit in order to track how batching changes the scale.
For a mini-batch average
\[
\textstyle \bar g_B(x):=\tfrac1B\sum_{i=1}^B \nabla f(x,\xi_i),
\]
the von Bahr--Esseen inequality \citep{vonBahrEsseen1965,Pinelis1986} gives
\[
\mathbb E[\norm{\bar g_B(x)-\nabla f(x)}_r^r\mid x]^{1/r}
\lesssim_{r}
\sigma_rB^{-1/p_r}.
\]
Let $d$ denote the ambient dimension, or the effective rank in the Schatten case.
If $q\le r$, then
\begin{equation}
\label{eq:q-r-norm-conversion}
\norm{u}_q\le d^{1/q-1/r}\norm{u}_r.
\end{equation}
Thus the only role of batching in the rate below is to replace $\sigma_q$ by $d^{1/q-1/r}\sigma_rB^{-1/p_r}$, while keeping $\tau_q=1$.
We are now ready to state our complexity result.

\begin{cor}[Oracle complexity with less heavy-tailed noise]
\label{prop:minibatch-finite-p}
Let $p\ge2$, set $q=p/(p-1)$, and suppose $p\ge p_r$.
Suppose \Cref{ass:mean-zero-r-moment} holds.
Run \ref{eq:SODA} with gradient feedback $g^k=\bar g_B(y^k)$.
Then \Cref{cor:soda-nonacc-uniform} applies with its noise parameter $\sigma_q$ replaced by $d^{1/q-1/r}\sigma_rB^{-1/p_r}$, giving
\[
\mathbb E[f(x^{n-1})-f(x^\star)]
\lesssim
\tfrac{A_p}{n}
+
\tfrac{S_{r,p}}{B^{1/p_r}n^{1/p}},
\]
where $R_p:=\norm{x^\star-z^0}_p$, $A_p=L_pR_p^2$, and $S_{r,p}:=\sigma_r d^{1/q-1/r}R_p$.
Then accuracy $\epsilon$ is reached by taking
\[
n\asymp
\tfrac{A_p}{\epsilon},
\qquad
B\asymp
\left(\tfrac{S_{r,p}}{\epsilon}\right)^{p_r}
\left(\tfrac{\epsilon}{A_p}\right)^{p_r/p},
\]
and hence with total stochastic gradient budget
\[
N=nB
=
O\!\left(
\tfrac{A_p}{\epsilon}
+
\left(\tfrac{S_{r,p}}{\epsilon}\right)^{p_r}
\left(\tfrac{A_p}{\epsilon}\right)^{1-p_r/p}
\right).
\]
\end{cor}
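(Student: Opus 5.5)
The plan is to reduce the statement to \Cref{cor:soda-nonacc-uniform} by verifying \Cref{ass:noise} for the mini-batch feedback $g^k=\bar g_B(y^k)$ with $\tau_q=1$ and an explicit $\sigma_q$, and then to balance $n$ and $B$.

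\textbf{Step 1: noise level of the mini-batch oracle.} Fix $k$ and condition on $\mathcal F_{k-1}$, so that $y^k$ is fixed. Unbiasedness (\Cref{ass:soda-stochastic}) is immediate from \Cref{ass:mean-zero-r-moment}. By the von Bahr--Esseen bound quoted before the statement,
\[
\mathbb E\bigl[\norm{\bar g_B(y^k)-\nabla f(y^k)}_r^r\mid\mathcal F_{k-1}\bigr]^{1/r}\lesssim_r \sigma_r B^{-1/p_r}.
\]
Since $p\ge p_r$, we have $q\le r$. Applying \eqref{eq:q-r-norm-conversion} pointwise and then Jensen's inequality ($q\le r$, so $L^q\le L^r$ moments) gives
\[
\mathbb E[\norm{\bar g_B-\nabla f}_q^q\mid\mathcal F_{k-1}]^{1/q}\le d^{1/q-1/r}\,\mathbb E[\norm{\bar g_B-\nabla f}_r^r\mid\mathcal F_{k-1}]^{1/r}.
\]
\Cref{rem:gradvar} then yields \Cref{ass:noise} with $\tau_q=1$ and $\sigma_q\lesssim d^{1/q-1/r}\sigma_rB^{-1/p_r}$.

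The case $k=0$ needs separate care because $g^{-1}$ is not defined. I would handle it with the convention $g^{-1}=\nabla f(y^{-1})$ deterministic (equivalently $m^0=0$ with $\bar\alpha$-weighting). Then only a single noise term appears, which is bounded the same way.

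\textbf{Step 2: substitute into \Cref{cor:soda-nonacc-uniform}.} Plugging this $\sigma_q$ into the $p$-norm specialization of \Cref{cor:soda-nonacc-uniform} gives
\[
\frac{A_p}{n}+\frac{S_{r,p}}{B^{1/p_r}n^{1/p}}.
\]
The stepsize $\eta$ is chosen with the new $\sigma_q$, and constants depending on $p$ and $r$ are absorbed.

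\textbf{Step 3: complexity.} Require each term to be at most $\epsilon$. The first term forces $n\asymp A_p/\epsilon$. The second requires $B^{1/p_r}\gtrsim S_{r,p}\,\epsilon^{-1}n^{-1/p}$, so
\[
B\asymp (S_{r,p}/\epsilon)^{p_r}(\epsilon/A_p)^{p_r/p}.
\]
Because $B\ge1$ is an integer, I take $B=\max\{1,\lceil\cdot\rceil\}$. Then
\[
N=nB\lesssim n+n\,(S_{r,p}/\epsilon)^{p_r}(\epsilon/A_p)^{p_r/p},
\]
which is the stated bound since $n(\epsilon/A_p)^{p_r/p}\asymp (A_p/\epsilon)^{1-p_r/p}$.

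\textbf{Main obstacle.} I expect the main difficulty to be Step 1, specifically ensuring that von Bahr--Esseen applies in the $\norm{\cdot}_r$ geometry of the ambient space. For vector $\ell_r$ with $r\le2$, this follows coordinatewise: $\norm{\cdot}_r^r$ is a sum of $r$-th powers, so the scalar inequality summed over coordinates gives the bound. For Schatten-$r$, I would invoke the Pinelis-type martingale inequality for the $r$-uniformly smooth space $S_r$. Dimensional constants enter only through \eqref{eq:q-r-norm-conversion}, with $d$ the rank bound, since $\norm{u}_q\le \operatorname{rank}(u)^{1/q-1/r}\norm{u}_r$ by Hölder on singular values.
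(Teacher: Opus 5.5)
Your proposal is correct and follows the paper's argument: von Bahr--Esseen plus the norm conversion \eqref{eq:q-r-norm-conversion}, together with Jensen since $q\le r$, gives the effective $q$-moment scale $d^{1/q-1/r}\sigma_rB^{-1/p_r}$ with $\tau_q=1$ via \Cref{rem:gradvar}; this is substituted into \Cref{cor:soda-nonacc-uniform} and the two terms are balanced. Your $k=0$ convention agrees with the paper's choice $g^{-1}=0$, $y^{-1}=x^\star$, which is consistent because $\nabla f(x^\star)=0$ in the unconstrained $p$-norm setting, and your rounding $B=\max\{1,\lceil\cdot\rceil\}$ is what produces the additive $A_p/\epsilon$ term in $N$.
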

There are a couple of insights we can extract from this complexity result.
\paragraph{Batch size rule for a fixed geometry}
Suppose the total stochastic gradient budget $N=nB$ is fixed instead of the accuracy $\epsilon$.
Substituting $B=N/n$ in the rate of \Cref{prop:minibatch-finite-p} gives
\[
\epsilon_p(n,N)
\lesssim
\tfrac{A_p}{n}
+
\tfrac{S_{r,p}}{N^{1/p_r}}\,n^{1/p_r-1/p},
\qquad p\ge p_r .
\]
At the smallest admissible geometry $p=p_r$, the stochastic term is independent of the iteration count after the substitution $B=N/n$.
Thus any batch size choice with
\[
\tfrac{A_{p_r}}{n}
\lesssim
\tfrac{S_{r,p_r}}{N^{1/p_r}}
\qquad\Longleftrightarrow\qquad
1\le B\lesssim \tfrac{S_{r,p_r}}{A_{p_r}}N^{1-1/p_r},
\]
attains the same oracle complexity.
This is otherwise not the case.

For $p>p_r$, balancing the smooth and stochastic terms gives the following scaling rule:
\equationbox{
\begin{equation}\label{eq:bs-scaling-rule} 
n_p
\asymp
\left(\tfrac{A_p}{S_{r,p}}\right)^{1/(1+1/p_r-1/p)}
N^{\frac{1/p_r}{1+1/p_r-1/p}},
\qquad
B_p=\tfrac{N}{n_p}
\asymp
\left(\tfrac{S_{r,p}}{A_p}\right)^{1/(1+1/p_r-1/p)}
N^{\frac{1-1/p}{1+1/p_r-1/p}}.
\end{equation}
}

\begin{table}[t]
\centering
\begingroup
\makeatletter
\let\hyper@anchorstart\@gobble
\let\hyper@anchorend\relax
\caption{Batch size scaling rules for a given geometry under bounded variance. Here
$N=nB$ is the total number of stochastic gradient oracle calls, which corresponds
to the total token budget in LLMs. 
The batch size increases with $p$ and recovers the BST rule for $p\rightarrow\infty$ \citep{islamov2026batchsize}.}
\label{tab:bounded-var-batchsize}
\makeatother
\endgroup
\begin{minipage}[c]{0.68\linewidth}
\centering
\scriptsize
\setlength{\tabcolsep}{2pt}
\begin{tabular}{@{}p{0.16\linewidth}p{0.21\linewidth}p{0.28\linewidth}p{0.29\linewidth}@{}}
\toprule
\textbf{Geometry} & \textbf{Iterations} & \textbf{Batch size} & \textbf{Rate} \\
\midrule
$p=2$ &
$n_2\asymp N^{1/2}$ &
$B_2\asymp N^{1/2}$ &
$\epsilon_N\asymp N^{-1/2}$ \\
$2<p<\infty$ &
$n_p\asymp N^{p/(3p-2)}$ &
$B_p\asymp N^{2(p-1)/(3p-2)}$ &
$\epsilon_N\asymp N^{-p/(3p-2)}$ \\
$p\rightarrow \infty$ &
$n_\infty\asymp N^{1/3}$ &
$B_\infty\asymp N^{2/3}$ &
$\epsilon_N\asymp N^{-1/3}$ \\
\bottomrule
\end{tabular}
\end{minipage}\hfill
\begin{minipage}[c]{0.32\linewidth}
\centering
\includegraphics[width=\linewidth]{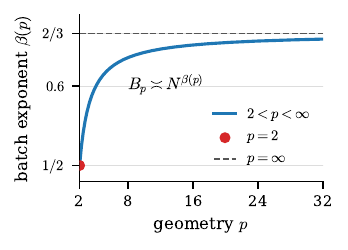}
\end{minipage}
\end{table}

\paragraph{Bounded variance}
Let us get a feeling for this scaling by developing the special case of bounded variance, $r=2$ and hence $p_r=2$.
The corresponding batch size scaling rules based on \eqref{eq:bs-scaling-rule} are summarized in \Cref{tab:bounded-var-batchsize}.
Interestingly, for $p\rightarrow\infty$ we recover the BST batch size scaling rule of \citet{islamov2026batchsize}, which identically uses $B_\infty \asymp N^{2/3}$.

Specifically, for a fixed oracle budget $N$, the two endpoint stochastic error rates are
\[
p=2:\quad \epsilon_N=O(N^{-1/2}),
\qquad
p\rightarrow\infty:\quad \epsilon_N=O(d^{1/3}N^{-1/3}).
\]
From the above it might appear as if the Euclidean geometry is always better than the $\infty$-geometry, both in terms of the dependence on the oracle budget $N$ and the dimension $d$.
However, this comparison does not take into account that the smoothness and radius terms themselves depend on the geometry.
In the next section we show that, when the objective is smooth in the $\infty$-geometry, different regimes emerge depending on $d$ and $N$.

\begin{remark}
We have focused on optimizing the number of oracle calls $N$ which corresponds to optimizing the total computation budget.
One could equally well consider fixing the number of iterations $n$ and hence the wall-clock time (given infinite compute).
\end{remark}

\section{Regimes}

\subsection[Choosing p under infinity smoothness]{Choosing $p$ under $\infty$-smoothness}

We now ask what the choice of the geometry for the algorithm should be in deep learning based on the derived complexities.
Let us have some faith and assume that the neural network is smooth in the Schatten-$\infty$, as e.g., argued in the modular norm work \citep{large2024scalable,bernstein2024modular}.
Maybe surprisingly, we will see that the optimal choice for the algorithm is not necessarily the matching Schatten-$\infty$ geometry.

Assume that $f$ is $L_\infty$-smooth with respect to $\norm{\cdot}_\infty$, and that the mini-batch oracle satisfies \Cref{ass:mean-zero-r-moment} with a fixed $r\in(1,2]$.
The oracle fixes the threshold $p_r=r/(r-1)$ for which the $p$-geometry method is admissible when $p\ge p_r$.

The $\infty$-smoothness assumption is inherited by every finite-$p$ analysis geometry, since
\[
\norm{\nabla f(x)-\nabla f(y)}_{p/(p-1)}
\le
\norm{\nabla f(x)-\nabla f(y)}_1
\le
L_\infty\norm{x-y}_\infty
\le
L_\infty\norm{x-y}_p.
\]
Also $R_p:=\norm{x^\star-z^0}_p\le d^{1/p}R_\infty$, which shows the dimension price of using a smaller $p$.

\subsubsection{Deterministic setting and acceleration}

A tradeoff can already be seen in the deterministic setting.
When the noise is negligible, we take $B=1$ and identify the oracle budget with the number of iterations, $N=n$.
The accelerated rate (\Cref{cor:soda-acc-uniform}) then gives
\equationbox{
\[
\epsilon^{\rm det}_p(N)
\lesssim
\tfrac{L_\infty R_\infty^2d^{2/p}}{N^{1+2/p}}
=\tfrac{L_\infty R_\infty^2}{N}\left(\tfrac{d}{N}\right)^{2/p}.
\]
}
Clearly, when $N\gtrsim d$, picking $p$ small improves the rate through $(\tfrac{d}{N})^{2/p}$.
On the other hand, when $N\lesssim d$, the same factor penalizes small $p$ and large $p \rightarrow \infty$ is preferred.
The deterministic regimes are summarized in \Cref{tab:linf-p-choice}.

\subsubsection{Heavy-tailed noise and batching}

In the noisy setting, the accelerated complexity result in \Cref{cor:acc-minibatch-finite-p} gives the same kind of $d$-$N$ separation.
Under $\infty$-smoothness, the only additional estimates are
\[
A_p=L_pR_p^2\lesssim L_\infty R_\infty^2d^{2/p},
\qquad
S_{r,p}=\sigma_rd^{1/q-1/r}R_p\lesssim \sigma_rR_\infty d^{1/p_r}.
\]
Write $A:=L_\infty R_\infty^2$ and $S:=\sigma_rR_\infty$.
Thus, after setting $B=N/n$, fixing the budget $N$ in \Cref{cor:acc-minibatch-finite-p} gives a rate of
\[
\epsilon^{\rm stoch}_p(n,N)
\lesssim
\tfrac{A d^{2/p}}{n^{1+2/p}}
+
\tfrac{S d^{1/p_r}}{N^{1/p_r}}\,
n^{1/p_r-1/p},
\qquad
p\ge p_r .
\]

At the smallest admissible geometry, taking $p=p_r$ and $B=1$ gives
\[
\epsilon^{\rm stoch}_{p_r}(N)
\lesssim
A\,d^{-1}\left(\tfrac{d}{N}\right)^{1+2/p_r}
+
S\left(\tfrac{d}{N}\right)^{1/p_r}.
\]
At the other endpoint $p\rightarrow \infty$, fixing the budget $N$ and optimizing over $n$ gives
\[
\epsilon^{\rm stoch}_\infty(N)
\lesssim
A\,d^{-1}\left(\tfrac{d}{N}\right)
+
A^{1/(1+p_r)}S^{p_r/(1+p_r)}
\left(\tfrac{d}{N}\right)^{1/(1+p_r)}.
\]
Thus the accelerated term is clearly better for $p=p_r$ once $N\gtrsim d$.
The stochastic terms give the same comparison once
\[
\tfrac{d}{N}
\lesssim
\left(\tfrac{A}{S}\right)^{p_r},
\qquad\text{equivalently}\qquad
N\gtrsim d\left(\tfrac{S}{A}\right)^{p_r}.
\]
Combining the deterministic and stochastic comparisons, the smallest admissible geometry $p=p_r$ is favored when
\equationbox{
\[
N\gtrsim
d\max\left\{1,\left(\tfrac{S}{A}\right)^{p_r}\right\}.
\]
}
This is again the low-dimensional regime $N\gtrsim d$, as summarized in \Cref{tab:linf-p-choice}.

\paragraph{Limitations}
To compare the different methods we have used worst-case norm conversions both for the radius, from $R_{p_r}$ to $R_\infty$, and for the noise, from the algorithmic dual norm $\norm{\cdot}_q$ to the assumed $r$-moment norm \eqref{eq:q-r-norm-conversion}, both of which introduce a dimensionality dependency.
Sharper problem-dependent relations can improve these constants and move the boundary.

\begin{table}[t]
\centering
\footnotesize
\begingroup
\makeatletter
\let\hyper@anchorstart\@gobble
\let\hyper@anchorend\relax
\caption{Optimal choice of geometry under smoothness in $p=\infty$.
Matching the problem geometry is favorable in the high-dimensional regime ($N\lesssim d$). 
However, in the low-dimensional regime ($N\gtrsim d$) choosing a smaller $p$ is favorable both deterministically and in the presence of heavy-tailed noise.
}
\label{tab:linf-p-choice}
\makeatother
\endgroup
\begin{tabular}{@{}p{0.29\linewidth}p{0.40\linewidth}p{0.23\linewidth}@{}}
\toprule
\textbf{Setting} & \textbf{Regime} & \textbf{Preferred geometry} \\
\midrule
Deterministic ($\sigma_q=0$) &
$N\gtrsim d$ &
$p=2$ \\
Deterministic ($\sigma_q=0$) &
$N\lesssim d$ &
$p\rightarrow \infty$ \\
Heavy-tailed noise ($\sigma_r>0$) &
$N\gtrsim d \max\left\{1,\left(\tfrac{S}{A}\right)^{p_r}\right\}$ &
$p=p_r$ \\
Heavy-tailed noise ($\sigma_r>0$) &
$N\lesssim d \max\left\{1,\left(\tfrac{S}{A}\right)^{p_r}\right\}$ &
$p\rightarrow \infty$ \\
\bottomrule
\end{tabular}
\end{table}

\subsubsection{Anytime switching under heavy-tailed noise}

The careful reader might have noticed that the above regime discussion requires knowing the budget $N$ in order to pick the optimal geometry $p$.
What if we did not know $N$?
The regime bound suggests switching from $p$ close to $\infty$ to $p=p_r$ once
\[
N_{\rm sw}
\asymp
d\max\left\{1,\left(\tfrac{S}{A}\right)^{p_r}\right\}.
\]
When the final oracle budget is not known in advance, this gives the rule
\[
p(N)
=
\begin{cases}
p_d, & N<N_{\rm sw},\\
p_r, & N\ge N_{\rm sw},
\end{cases}
\qquad
p_d=\max\{3,\lceil\log(\mathrm e d)\rceil\}.
\]
This is the optimization analogue of the adaptive-regularization principle in anytime online learning on $\ell_p$ balls \citep{johnson2025adaptive}.
Concretely, use the following restart procedure:
\begin{enumerate}[leftmargin=*,itemsep=0pt,topsep=2pt]
\item Run \ref{eq:SODA} with $p=p_d$ using the parameterization from \Cref{cor:soda-near-linf} until $N_{\rm sw}$ oracle calls, choosing the batch size by optimizing the complexity bound at the switching budget.
\item Restart with $p=p_r$ using the parameterization from \Cref{cor:acc-minibatch-finite-p}, keeping the same batch size.
\end{enumerate}

Remarkably, Soft-Muon \citep{nilin2024contramuon} seems to have arrived at such a geometry-schedule empirically by softening the $\infty$-geometry later in training.\footnote{See the \href{https://github.com/KellerJordan/modded-nanogpt/pull/291}{speedrun submission}.} In contrast, the proposal above is a \emph{discrete} switching strategy motivated by the regime discussion.
We leave further practical exploration of this idea to future work.

\subsection{Implications for scaling laws}

We now ask what can be extracted from the regime discussion above in the context of deep learning.
A typical scaling-law prescription, such as Chinchilla \citep{hoffmann2022training}, is that the model size should scale proportionally to the number of training tokens/stochastic gradient oracle calls,
\[
\texttt{model\_size} \propto N.
\]
For a matrix parameter in a depth-$D$ network, however, the model size is
\[
\texttt{model\_size}\propto \texttt{fan\_in}*\texttt{fan\_out}*D .
\]
If width and depth are scaled together, then the rank of each matrix grows only like
\[
\texttt{rank}\ \propto\ (\texttt{model\_size})^{1/3}\ \propto\ N^{1/3}.
\]
For Schatten-$p$ geometries, the rank is the effective dimension $d$ in the regime discussion.
The scaling law puts us well within the low-dimensional regime $N\gtrsim d$, where it is optimal to pick the smallest admissible geometry, Schatten-$p_r$, based on the heavy-tailed noise assumption (see \Cref{tab:linf-p-choice}).

The Chinchilla scaling therefore lies in the low-dimensional regime, where smaller $p$ is favorable.
This may explain why the improvement for Schatten-$\infty$ based methods such as Muon over baselines has been observed to reduce with scale \citep{wen2025fantastic,semenov2025benchmarking} and the recent interest in finite Schatten-$p$ norm optimizers \citep{nilin2024contramuon,pang2026htmuon,shumaylov2026muon}.
As promising future work we leave open the question of how to scale more appropriately for Schatten-$\infty$ based optimizers.

%% file: Sections/RelatedWork.tex
The regime discussion is inspired by \citet{johnson2025adaptive}, who study FTRL in online learning and identify when the optimal geometry separates from the Euclidean choice $p=2$.
In comparison, we study an optimistic accelerated algorithm in the offline stochastic setting under heavy-tailed noise with moment $r$, and ask when the smallest admissible geometry $p_r$ separates from the near-$\infty$ geometry.
\looseness=-1

The deterministic accelerated rate is related to non-Euclidean acceleration and optimistic online-to-batch conversion.
\citet{daspremont2018optimal} study accelerated methods under $p$-uniform convexity, showing the same $p$-dependent deterministic rate of $L_p R_p^2/n^{1+2/p}$ on $\ell_p$ balls, matching the lower bounds of \citet{guzman2018lower}.
Another route to acceleration was pioneered in the online-learning literature through online-to-offline and adaptive universality results \citep{Lev17,LYC18}, the constrained UnixGrad framework \citep{KLBC19}, and the optimistic online-to-batch constructions of \citet{cutkosky2019anytime,joulani2020simpler}.
This perspective was later used in the schedule-free framework of \citet{defazio2024road}.
Our analysis follows this latter route, but uses Optimistic Dual Averaging with a $p$-uniformly convex reference function, which changes the Euclidean accelerated term $L R^2/n^2$ into $L_p R_p^2/n^{1+2/p}$ and the noise term to be based on a heavy-tailed assumption.
\looseness=-1

Uniform convexity was used to analyze Online Mirror Descent in the context of online learning in \citet{sridharan2010convex,srebro2011universality}.
In stochastic convex optimization with infinite variance, \citet{vural2022mirror} prove optimal rates for stochastic mirror descent with uniformly convex reference functions under heavy-tailed stochastic gradients with bounded $r$-th moment, $r\in(1,2]$.
In Euclidean geometry, much of the theory handles such noise by modifying the gradients, e.g, \citet{cutkosky2021highprobability} combine clipping, momentum, and normalized gradient descent and \citet{gorbunov2020stochastic} use accelerated gradient clipping.
More recent work studies when clipping can be avoided \citep{hubler2025gradient,liu2025nonconvex}.

Muon can be viewed and analyzed as a conditional gradient method \citep{pethick2025trainingdeeplearningmodels,StochasticConditionalGradient2020mokhtari}.
Specifically with weight decay, Muon reduces to stochastic Frank-Wolfe for which \citet{UnderstandingGradientOrthogonalization2025kovaleva} additionally provided analysis under star-convexity.
This constrained endpoint is recovered by our template in the limit $p\to\infty$ with primal extrapolation ($\bar\lambda_k=0$) and no optimism ($\bar\alpha_k=0$).
Muon can also be seen as an instance of the Lion-$\mathcal K$ framework, which allows capturing what we refer to as optimism \citep{MuonOptimizesSpectral2025chena,LionsMuonsOptimization2025sfyraki}.
Our analysis differs by focusing on finite $p$, where we show that optimism helps exploit smoothness and accelerate convergence.
For Schatten-$\infty$ geometries we approximate with $p=O(\log d)$, matching practice where $UV^\top$ is only computed approximately.

%% file: Sections/Conclusion.tex
We have learned that the choice of $p$-norm to use in the algorithm depends on the training regime.
For high-dimensional problems ($N\lesssim d$) the optimizer should match the smoothness assumption, while for low-dimensional problems ($N\gtrsim d$) the optimizer should instead match the noise assumption in the stochastic case and use the Euclidean geometry in the deterministic case to accelerate.

What is striking about our results is how closely they match empirical observations.
Our batch size analysis provides an explanation for why Schatten-$\infty$ based optimizers like Muon typically require larger batch sizes and gives a batch size scaling rule for any Schatten-$p$ norm.
We have also seen an explanation for why stepsize warmup is not needed for the Schatten-$\infty$ norm, quantified when a smaller Schatten-$p$ norm is beneficial and provided theoretical backing for existing empirical schedules of $p$.

%% file: Sections/Acknowledgments.tex
The author thanks the Swiss chômage for supporting the work.

%% file: Sections/AnalysisV2Appendix.tex
\subsection{Preliminaries}

We use the following notation.
For a norm $\norm{\cdot}$ on the primal space, the dual norm is
\[
\norm{g}_*:=\sup_{\norm{x}\le1}\braket{g,x}.
\]
For a proper function $h:\mathcal X\to\R\cup\{\infty\}$, its domain is
\[
\operatorname{dom}h:=\{x\in\mathcal X:h(x)<\infty\}.
\]
Constraints are supported by taking the reference function to be $h(x)=\psi(x)+\iota_{\mathcal C}(x)$, where $\iota_{\mathcal C}$ is the indicator of the constraint set, equal to $0$ on $\mathcal C$ and $+\infty$ otherwise.
Then $\operatorname{dom}h=\mathcal C\cap\operatorname{dom}\psi$, and the dual map can be written as
\[
\partial h^*(-\gamma m)
=
\argmin_{x\in\mathcal X}\{\gamma\braket{m,x}+h(x)\}
=
\argmin_{x\in\mathcal C}\{\gamma\braket{m,x}+\psi(x)\}.
\]

For a matrix $X$, let $\sigma_i(X)$ denote its singular values and define the Schatten-$p$ norm by
\[
\norm{X}_{S_p}:=\left(\textstyle\sum_i \sigma_i(X)^p\right)^{1/p},
\qquad
\norm{X}_{S_\infty}:=\max_i\sigma_i(X).
\]
The dual of $\norm{\cdot}_{S_p}$ is $\norm{\cdot}_{S_q}$ when $q=p/(p-1)$.

\paragraph{Additional $p$-uniformly convex norms.}
\label{app:additional-p-uniform-norms}
The same framework is not restricted to vector $\ell_p$ and Schatten-$p$ norms (see \citet{ball1994sharp} for Schatten-$p$ norms).
For a matrix $X$ with rows $X_i$, define the mixed row norm
\[
\norm{X}_{2,p}:=\left(\textstyle\sum_i \norm{X_i}_2^p\right)^{1/p},
\qquad
\norm{X}_{2,\infty}:=\max_i\norm{X_i}_2 .
\]
The endpoint $\norm{\cdot}_{2,\infty}$ is the operator norm $\norm{\cdot}_{2\to\infty}$.
For $p>2$, $h(X)=\tfrac1p\norm{X-Z^0}_{2,p}^p$ is $p$-uniformly convex with respect to $\norm{\cdot}_{2,p}$ with admissible constant $\mu=2^{2-p}$, since it is an $\ell_p$ sum of row norms.
Thus the results below also apply to the mixed-norm geometry after replacing $\norm{\cdot}_p$ and its dual by $\norm{\cdot}_{2,p}$ and $\norm{\cdot}_{2,q}$.

For positive quantities $A$ and $B$, $A\lesssim_\Theta B$ means $A\le C(\Theta)B$ for a constant depending only on the parameters $\Theta$, and $A\gtrsim_\Theta B$ means $B\lesssim_\Theta A$.
We write $A\asymp_\Theta B$ when both $A\lesssim_\Theta B$ and $A\gtrsim_\Theta B$ hold, and $A=O_\Theta(B)$ with the same meaning as $A\lesssim_\Theta B$.

\subsection{Technical lemmas}

Our proof builds on \citet{defazio2024road}.
However, rather than combining primal extrapolation with an adaptive version of Optimistic Mirror Descent, we use Optimistic Dual Averaging \citep{rakhlin2013online} as the underlying no-regret algorithm.
The ODA analysis is based on a non-adaptive version of \citet{rakhlin2013online}, which is classical (see, e.g., \citet[Sec.~7.12]{Orabona2019}).
We furthermore generalize the result to uniform convex reference functions, H\"older smoothness and a heavy-tailed type noise assumption, which are essential for our regime discussion.

\begin{assumption}[$(L_\nu,\nu)$-H\"older smooth]
\label{ass:holder-smooth}
For some $\nu\in(0,1]$, the function $f$ is differentiable and satisfies
\[
\norm{\nabla f(u)-\nabla f(v)}_*
\le
L_\nu \norm{u-v}^{\nu}
\qquad
\text{for all }u,v\in\mathcal X .
\]
\end{assumption}
When specializing \Cref{ass:holder-smooth} to $\norm{\cdot}_p$, we write $L_{\nu,p}$ for the corresponding H\"older smoothness constant and use the shorthand $L_p:=L_{1,p}$ in the smooth case.

\begin{lemma}[ODA regret with a uniformly convex reference]
\label{lem:oda-regret}
Consider \ref{eq:SODA}, with the gradient term $\nabla f(y^{k},\xi_{k})$ replaced by $g^{k}$.
Choose positive weights $a_0,\dots,a_{n-1}$ and write
\[
A_k := \textstyle \sum_{i=0}^{k} a_i,
\qquad A_{-1}:=0.
\]
We use the convention $g^{-1}=0$ and set $z^0\in\partial h^*(0)$.
For $k=0,\dots,n-2$, choose
\[
\alpha_k = \tfrac{a_k}{A_k},
\quad
\bar\alpha_k = \tfrac{a_{k+1}}{A_{k+1}},
\quad
\gamma_k=\eta \tfrac{A_k}{1-\bar\alpha_k}
\]
for some $\eta>0$.
Let $p\ge2$, let $q=p/(p-1)$, and let $h$ be $p$-uniformly convex with constant $\mu$ in the sense of \Cref{def:uniform-convexity}.
Then, for every $x \in \mathcal X$,
\begin{equation}
\label{eq:oda-regret}
\textstyle \sum_{k=0}^{n-1} a_k \braket{g^k, z^k-x}
\le
\tfrac{h(x)-\inf_{u \in \mathcal X}h(u)}{\eta}
+
\tfrac{\eta^{q-1}}{q\mu^{q-1}} \textstyle \sum_{k=0}^{n-1} a_k^q \norm{g^k-g^{k-1}}_*^q.
\end{equation}
\end{lemma}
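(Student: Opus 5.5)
The plan is to first rewrite \ref{eq:SODA} under the stated parameter choices as optimistic follow-the-regularized-leader (optimistic dual averaging), and then run the classical optimistic FTRL regret argument (as in \citet[Sec.~7.12]{Orabona2019}), replacing strong convexity by $p$-uniform convexity and using a Young inequality with exponents $(p,q)$.

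\textbf{Step 1 (unrolling the recursion).} Write $G_k:=\sum_{i=0}^k a_i g^i$ with $G_{-1}:=0$. Since $\alpha_k=a_k/A_k$ and $m^0=0$, an induction gives $m^{k+1}=G_k/A_k$. Since $1-\bar\alpha_k=A_k/A_{k+1}$, we get $\bar m^{k+1}=(G_k+a_{k+1}g^k)/A_{k+1}$ and $\gamma_k=\eta A_{k+1}$. Hence
\[
z^{k+1}\in\argmin_{x}\ \eta\braket{G_k+a_{k+1}g^k,x}+h(x),
\]
which is FTRL on the linear losses $a_ig^i$ with hint $a_{k+1}g^k$. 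For $k=-1$ this reads $z^0\in\argmin_x h(x)=\partial h^*(0)$, consistent with $g^{-1}=0$. The index $n-1$ is not needed for the iterates but is covered by extending the same formulas.

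\textbf{Step 2 (optimistic FTRL decomposition).} Introduce the auxiliary non-optimistic iterate $w^{k+1}\in\argmin_x \eta\braket{G_k,x}+h(x)$, with $w^0=z^0$. Split each regret term as
\[
a_k\braket{g^k,z^k-x}=a_k\braket{g^k-g^{k-1},z^k-w^{k+1}}+a_k\braket{g^{k-1},z^k-w^{k+1}}+a_k\braket{g^k,w^{k+1}-x}.
\]
The last two pieces are handled by the standard be-the-leader telescoping. Let $F_k(x):=\eta\braket{G_{k-1},x}+h(x)$ and use the optimality conditions of $z^k$ and $w^{k+1}$: the subgradients $s\in\partial h$ they provide feed into \Cref{def:uniform-convexity}. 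Comparing $F_{k+1}(w^{k+1})$, $F_{k+1}(z^k)$, $F_k(w^k)$ and $F_k(z^k)$ then yields
\[
\textstyle \eta\sum_k a_k\braket{g^k,z^k-x}\le h(x)-\inf h+\sum_k\Big[\eta a_k\braket{g^k-g^{k-1},z^k-w^{k+1}}-\tfrac{\mu}{p}\norm{z^k-w^{k+1}}^p\Big].
\]
Nonnegative leftover Bregman terms (such as $D_h(z^k,w^k)$) are simply dropped. The key point is that $z^k$ minimizes $F_k+\eta a_k\braket{g^{k-1},\cdot}$, so uniform convexity at the minimizer $z^k$ of $F_{k+1}$'s optimistic proxy gives the negative term $\tfrac{\mu}{p}\norm{w^{k+1}-z^k}^p$.

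\textbf{Step 3 (Young's inequality).} Bound each bracket using Hölder and Young with exponents $p,q$ and weight $\epsilon=\mu/\eta$:
\[
a_k\braket{d,z-w}\le \tfrac{\mu}{p\eta}\norm{z-w}^p+\tfrac{\eta^{q-1}}{q\mu^{q-1}}a_k^q\norm{d}_*^q,
\]
which uses $\epsilon^{-q/p}=(\eta/\mu)^{q-1}$. The first term cancels the negative uniform-convexity term, and dividing by $\eta$ gives \eqref{eq:oda-regret}.

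The main obstacle is Step 2. One has to carry out the telescoping carefully when $h$ may be nonsmooth or include an indicator, working with subgradients selected from the optimality conditions rather than gradients. One must also check that the Bregman term produced is exactly $D_h(w^{k+1},z^k;s)$ with $s\in\partial h(z^k)$, so that \Cref{def:uniform-convexity} applies in the right direction. I would first write the three-point inequality $F(u)\ge F(z)+\tfrac{\mu}{p}\norm{u-z}^p$ for any $z$ minimizing a function $F$ that is linear plus $h$. This follows directly from \Cref{def:uniform-convexity} with $s=-\nabla(\text{linear part})\in\partial h(z)$. I would then sum these inequalities.
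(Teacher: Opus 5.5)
Your proposal is correct and gives exactly the paper's constant $\tfrac{\eta^{q-1}}{q\mu^{q-1}}$, but by a primal route where the paper argues on the dual side. Step~2 checks out: write $F_{k+1}(z^k)-F_k(z^k)$ as $[F_{k+1}(z^k)-F_{k+1}(w^{k+1})]+[F_{k+1}(w^{k+1})-F_k(w^k)]+[F_k(w^k)-F_k(z^k)]$. Telescope the middle bracket, drop the last (it is $\le 0$), and bound the first by the three-point inequality for $\Phi_k:=F_k+\eta a_k\braket{g^{k-1},\cdot}$ at its minimizer $z^k$, evaluated at $w^{k+1}$.

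The paper writes $z^k\in\partial h^*(\hat\theta^k)$ and quotes the fact that $p$-uniform convexity of $h$ makes $h^*$ $q$-uniformly smooth with constant $\tfrac{1}{q\mu^{q-1}}$. It then telescopes the scalar potential $h^*(\theta^k)$, using Fenchel--Young at $z^k$ and at the comparator $x$.

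The two arguments are dual images of each other, since $h^*(\theta^k)=-F_{k+1}(w^{k+1})$ and $h^*(\hat\theta^k)=-\Phi_k(z^k)$. The paper's smoothness step is literally your inequality $F_{k+1}(z^k)-F_{k+1}(w^{k+1})\le\tfrac{\eta^q a_k^q}{q\mu^{q-1}}\norm{g^k-g^{k-1}}_*^q$. Its Fenchel--Young steps are your $F_k(z^k)\ge F_k(w^k)$ and $F_n(w^n)\le F_n(x)$.

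Your route is more self-contained: the three-point inequality plus Young with exponents $(p,q)$ is exactly the proof of the conjugate-smoothness fact the paper takes for granted. It also keeps the stability term $-\tfrac{\mu}{p}\norm{z^k-w^{k+1}}^p$ visible. The paper's route is shorter because it never introduces the auxiliary sequence $w^{k+1}$. In your version you must note that $w^{k+1}$ exists, which follows from coercivity of the closed uniformly convex $h$. In the paper, the values $h^*(\theta^k)$ stand in for the minimal values $F_{k+1}(w^{k+1})$ without naming a minimizer.
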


\begin{proof}
By the choice of $\gamma_k$ and $\bar\alpha_k$ we have, for $k=0,\dots,n-2$,
\[
\gamma_k(1-\bar\alpha_k)=\eta A_k,
\quad\text{and}\quad
\gamma_k\bar\alpha_k=\eta a_{k+1}.
\]
Combined with the choice of $\alpha_k$, the dual update can be written as
\[
z^k\in
\partial h^*\!\left(
-\eta \textstyle \sum_{i=0}^{k-1}a_i g^i
-\eta a_k g^{k-1}
\right),
\qquad k=0,\dots,n-1,
\]
where the sum is empty when $k=0$ and the case $k=0$ is exactly $z^0\in\partial h^*(0)$ because $g^{-1}=0$.
Set
\[
\theta^k := -\eta \textstyle \sum_{i=0}^{k} a_i g^i,
\qquad
\theta^{-1}:=0,
\qquad
\hat\theta^k := \theta^{k-1} - \eta a_k g^{k-1}.
\]
Thus $z^k \in \partial h^*(\hat\theta^k)$ and
\[
\theta^k = \hat\theta^k - \eta a_k(g^k-g^{k-1}).
\]
Since $h$ is $p$-uniformly convex, $h^*$ is $q$-uniformly smooth:
\[
h^*(\theta+\Delta)
\le
h^*(\theta)+\braket{\nabla h^*(\theta),\Delta}
+\tfrac{1}{q\mu^{q-1}}\norm{\Delta}_*^q .
\]
Hence
\[
h^*(\theta^k)
\le
h^*(\hat\theta^k)
-
\eta a_k \braket{g^k-g^{k-1}, z^k}
+
\tfrac{\eta^q a_k^q}{q\mu^{q-1}}\norm{g^k-g^{k-1}}_*^q.
\]
By Fenchel--Young,
\[
\braket{\hat\theta^k, z^k} = h(z^k)+h^*(\hat\theta^k),
\qquad
\braket{\theta^k, x} \le h(x)+h^*(\theta^k).
\]
Therefore
\[
\begin{aligned}
\eta a_k \braket{g^k, z^k-x}
&=
\eta a_k \braket{g^k-g^{k-1}, z^k}
+
\eta a_k \braket{g^{k-1}, z^k}
-
\eta a_k \braket{g^k, x}
\\
&=
\eta a_k \braket{g^k-g^{k-1}, z^k}
+
\braket{\theta^{k-1}-\hat\theta^k, z^k}
+
\braket{\theta^k-\theta^{k-1}, x}
\\
&=
\eta a_k \braket{g^k-g^{k-1}, z^k}
+
\braket{\theta^{k-1}, z^k}
-
\braket{\hat\theta^k, z^k}
+
\braket{\theta^k-\theta^{k-1}, x}
\\
&=
\eta a_k \braket{g^k-g^{k-1}, z^k}
+
\braket{\theta^{k-1}, z^k}
-
h(z^k)
-
h^*(\hat\theta^k)
+
\braket{\theta^k-\theta^{k-1}, x}
\\
&\le
h^*(\theta^{k-1}) - h^*(\theta^k)
+
\tfrac{\eta^q a_k^q}{q\mu^{q-1}}\norm{g^k-g^{k-1}}_*^q
+
\braket{\theta^k-\theta^{k-1}, x}.
\end{aligned}
\]
Summing over $k=0,\dots,n-1$ yields
\[
\eta \textstyle \sum_{k=0}^{n-1} a_k \braket{g^k, z^k-x}
\le
h^*(0)-h^*(\theta^{n-1})
+
\braket{\theta^{n-1}, x}
+
\tfrac{\eta^q}{q\mu^{q-1}} \textstyle \sum_{k=0}^{n-1} a_k^q \norm{g^k-g^{k-1}}_*^q.
\]
Finally, $h^*(0)=-\inf_{u\in\mathcal X}h(u)$ and $\braket{\theta^{n-1}, x} - h^*(\theta^{n-1}) \le h(x)$, so dividing by $\eta$ proves the claim.
\end{proof}

\begin{lemma}[SODA online-to-batch H\"older refinement]
\label{lem:soda-otb-holder}
Consider \ref{eq:SODA} with positive weights $a_0,\dots,a_{n-1}$ and
\[
\lambda_{k-1} = \tfrac{a_k}{A_k},
\qquad
\bar\lambda_{k-1} \le c_\nu \tfrac{a_k}{A_k},
\qquad
A_k := \textstyle \sum_{i=0}^{k} a_i,
\qquad
k=1,\dots,n-1.
\]
for some $c_\nu>0$ to be defined.
Suppose \Cref{ass:convex,ass:soda-stochastic,ass:holder-smooth} hold and let
\[
r_\nu := \tfrac{1+\nu}{\nu}.
\]
Then there exists a constant $c_\nu>0$, depending only on $\nu$, with $c_1=1/6$, such that, for every $x \in \mathcal X$, \ref{eq:SODA} satisfies
\begin{equation}
\label{eq:soda-otb-holder}
\begin{aligned}
A_{n-1} \, \mathbb E[f(x^{n-1})-f(x)]
&\le
\mathbb E\left[\textstyle \sum_{k=0}^{n-1} a_k \braket{g^k, z^k-x}\right] \\
&\qquad -
\tfrac{c_\nu}{L_\nu^{1/\nu}} \textstyle \sum_{k=0}^{n-1} A_{k-1}
\mathbb E\!\left[
\norm{\nabla f(y^k)-\nabla f(y^{k-1})}_*^{r_\nu}
\right].
\end{aligned}
\end{equation}
\end{lemma}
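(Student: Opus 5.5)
The plan is the standard online-to-batch argument with primal extrapolation in the style of \citet{defazio2024road}. Since the regret term is kept as is, only the left-hand side and the negative gradient-difference term need work.

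\textbf{Step 1: remove the noise.} By construction $z^k$, $y^k$ and $x^{k}$ are $\mathcal F_{k-1}$-measurable. Hence \Cref{ass:soda-stochastic} gives $\mathbb E\braket{g^k,z^k-x}=\mathbb E\braket{\nabla f(y^k),z^k-x}$. It therefore suffices to prove the deterministic inequality with $\nabla f(y^k)$ in place of $g^k$, and then take expectations.

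\textbf{Step 2: rewrite the iterate relations.} With $\lambda_{k-1}=a_k/A_k$ we have $A_kx^k=A_{k-1}x^{k-1}+a_kz^k$. The extrapolation $y^k=(1-\bar\lambda_{k-1})x^k+\bar\lambda_{k-1}z^k$ then gives
\[
a_k(z^k-y^k)=(1-\bar\lambda_{k-1})a_k(z^k-x^k)=(1-\bar\lambda_{k-1})A_{k-1}(x^k-x^{k-1}).
\]
Split $a_k\braket{\nabla f(y^k),z^k-x}=a_k\braket{\nabla f(y^k),y^k-x}+(1-\bar\lambda_{k-1})A_{k-1}\braket{\nabla f(y^k),x^k-x^{k-1}}$. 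Write the second inner product as $\braket{\nabla f(y^k),x^k-y^k}+\braket{\nabla f(y^k),y^k-x^{k-1}}$.

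\textbf{Step 3: lower-bound each piece.} For convex $(L_\nu,\nu)$-Hölder smooth $f$, use the cocoercivity-type bound
\[
f(u)\ge f(v)+\braket{\nabla f(v),u-v}+\tfrac{\nu}{1+\nu}L_\nu^{-1/\nu}\norm{\nabla f(u)-\nabla f(v)}_*^{r_\nu}.
\]
This is obtained from the Hölder descent lemma via the usual conjugate argument. Apply it as follows:
\begin{itemize}
\item with $(u,v)=(x,y^k)$ for the first piece;
\item with $(u,v)=(x^{k-1},y^k)$ for the $A_{k-1}$ piece, which produces the desired $A_{k-1}\norm{\nabla f(y^k)-\nabla f(x^{k-1})}_*^{r_\nu}$-type term;
\item with $(u,v)=(x^k,y^k)$ together with the descent lemma to control $f(x^k)$ against $f(y^k)+\braket{\nabla f(y^k),x^k-y^k}$.
\end{itemize}
Collecting terms should yield a one-step inequality
\[
A_k(f(x^k)-f(x))\le A_{k-1}(f(x^{k-1})-f(x))+a_k\braket{\nabla f(y^k),z^k-x}-c\,A_{k-1}L_\nu^{-1/\nu}\,E_k,
\]
where $E_k$ is a gradient-difference error. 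Telescoping from $k=0$, where $A_{-1}=0$ and $x^0=y^0=z^0$, gives \eqref{eq:soda-otb-holder}, provided $E_k$ can be converted to $\norm{\nabla f(y^k)-\nabla f(y^{k-1})}_*^{r_\nu}$.

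\textbf{Step 4 (main obstacle): convert to $y$-differences.} The negative terms naturally involve $\nabla f$ at $x^{k-1}$ and $x^k$ rather than at $y^{k-1}$. Since $x^{k-1}-y^{k-1}=\bar\lambda_{k-2}(x^{k-1}-z^{k-1})$ is small when $\bar\lambda$ is small, I would proceed in three moves:
\begin{itemize}
\item use $\norm{a+b}^{r}\le 2^{r-1}(\norm a^r+\norm b^r)$ to write $\norm{\nabla f(y^k)-\nabla f(y^{k-1})}_*^{r_\nu}$ in terms of $\norm{\nabla f(y^k)-\nabla f(x^{k-1})}_*^{r_\nu}$ and $\norm{\nabla f(x^{k-1})-\nabla f(y^{k-1})}_*^{r_\nu}$;
\item bound the second of these by a Bregman gap $f(x^{k-1})-f(y^{k-1})-\braket{\nabla f(y^{k-1}),x^{k-1}-y^{k-1}}$;
\item absorb that gap into the positive slack $\bar\lambda_{k-1}A_{k-1}\braket{\nabla f(y^k),x^k-z^k}$, left over from the factor $(1-\bar\lambda_{k-1})$, and into the previous step's decrease.
\end{itemize}
This absorption works only when $\bar\lambda_{k-1}\le c_\nu a_k/A_k$. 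The constant $c_\nu$ is chosen to close these inequalities, and the bookkeeping of $2^{r_\nu-1}$ factors and $\nu/(1+\nu)$ should produce $c_1=1/6$ when $\nu=1$. I expect this step, matching the leftover terms across consecutive indices so that everything telescopes, to be the delicate part; the rest is routine.
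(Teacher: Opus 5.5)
Your outer frame matches the paper's: conditioning in Step~1, co-coercivity, a triangle inequality to reach $y$-differences, and telescoping. There is, however, a genuine gap. It sits in Step~3, and Step~4 inherits it.

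\textbf{Where Step 3 fails.} You lower-bound $(1-\bar\lambda_{k-1})A_{k-1}\braket{\nabla f(y^k),x^k-y^k}$ with the descent lemma. This leaves the positive remainder $(1-\bar\lambda_{k-1})A_{k-1}\tfrac{L_\nu}{1+\nu}\norm{x^k-y^k}^{1+\nu}$, and nothing on the right of \eqref{eq:soda-otb-holder} can absorb it. The regret is kept verbatim, so there is no negative primal-distance term, and the only other negative terms are gradient differences. As a sanity check, take affine $f$, which is H\"older smooth for every $L_\nu>0$. There the one-step relation holds with equality and no remainder. Your chain instead adds a strictly positive term whenever $\bar\lambda_{k-1}>0$ and $z^k\neq x^k$. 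Bounding $D_f(x^{k-1},y^k)$ from below and $D_f(x^k,y^k)$ from above separately discards exactly the cancellation between them that the lemma relies on. Here $D_f(u,v):=f(u)-f(v)-\braket{\nabla f(v),u-v}$.

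\textbf{Why Step 4 cannot close.} To dominate $\norm{\nabla f(x^{k-1})-\nabla f(y^{k-1})}_*^{r_\nu}$ at weight of order $A_{k-1}$, you need a negative same-index $x$-versus-$y$ term of that weight. Your Step~3 produces one only with weight $a_k$, for example from convexity at $x^k$ when reconciling the function values. The proposed ``slack'' $\bar\lambda_{k-1}A_{k-1}\braket{\nabla f(y^k),x^k-z^k}$ has no definite sign, and it vanishes as $\bar\lambda_{k-1}\to0$. An upper bound $\bar\lambda_{k-1}\le c_\nu a_k/A_k$ can only help if some term carries a factor $1/\bar\lambda_{k-1}$, and none of your terms does.

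\textbf{The missing ingredient.} It is the exact one-step identity of \citet[Thm.~5]{defazio2024road}, which the paper uses. It follows from $A_{k-1}(x^k-x^{k-1})=a_k(z^k-x^k)=\tfrac{a_k}{\bar\lambda_{k-1}}(y^k-x^k)$. Equivalently, inside your own split, apply the three-point identity $D_f(x^{k-1},y^k)=D_f(x^{k-1},x^k)+D_f(x^k,y^k)+\braket{\nabla f(x^k)-\nabla f(y^k),x^{k-1}-x^k}$ instead of bounding the two divergences separately. Either way one gets
\[
\begin{aligned}
A_k(f(x^k)-f(x))-A_{k-1}(f(x^{k-1})-f(x))
&=a_k\braket{\nabla f(y^k),z^k-x}
-\tfrac{a_k}{\bar\lambda_{k-1}}D_f(y^k,x^k)\\
&\qquad-\tfrac{a_k(1-\bar\lambda_{k-1})}{\bar\lambda_{k-1}}D_f(x^k,y^k)
-A_{k-1}D_f(x^{k-1},x^k)
-a_kD_f(x,y^k).
\end{aligned}
\]

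\textbf{How it closes.} Set $U_k:=\norm{\nabla f(y^k)-\nabla f(x^k)}_*^{r_\nu}$ and $V_k:=\norm{\nabla f(x^k)-\nabla f(x^{k-1})}_*^{r_\nu}$.
\begin{itemize}
\item The term $-\tfrac{a_k}{\bar\lambda_{k-1}}D_f(y^k,x^k)$ has coefficient at least $A_k/c_\nu$. By co-coercivity it is at most $-\tfrac{\nu}{1+\nu}L_\nu^{-1/\nu}\tfrac{A_k}{c_\nu}U_k$.
\item The term $-A_{k-1}D_f(x^{k-1},x^k)$ is at most $-\tfrac{\nu}{1+\nu}L_\nu^{-1/\nu}A_{k-1}V_k$.
\item The paper then uses the three-term split $\norm{\nabla f(y^k)-\nabla f(y^{k-1})}_*^{r_\nu}\le 3^{r_\nu-1}(U_k+V_k+U_{k-1})$.
\item The large $U_k$ term pays for the $U_k$ in the current split and also for the $U_k$ that reappears in the split at step $k+1$. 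This is the telescoping $-A_kU_k+A_{k-1}U_{k-1}$, with $U_0=0$, and it is precisely the ``previous step's decrease'' you were reaching for.
\end{itemize}
For $\nu=1$, the requirements $3c_1\le\tfrac12$ and $6c_1^2\le\tfrac12$ give $c_1=1/6$. If you keep a two-term split with $2^{r_\nu-1}$, you would need to re-derive that constant yourself.
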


\begin{proof}
The choice $\lambda_{k-1}=a_k/A_k$ implies
\[
x^k = \tfrac{1}{A_k} \textstyle \sum_{i=0}^k a_i z^i,
\qquad
k=0,\dots,n-1.
\]
For differentiable $f$, write
\[
D_f(u,v) := f(u)-f(v)-\braket{\nabla f(v), u-v}.
\]
For $k=0,\dots,n-1$, following \citet[Thm. 5]{defazio2024road}, a careful expansion gives
	\[
	\begin{aligned}
	A_k\bigl(f(x^k)-f(x)\bigr)
- A_{k-1}\bigl(f(x^{k-1})-f(x)\bigr)
&=
a_k\braket{\nabla f(y^k), z^k-x}
\\
&\qquad
-
\tfrac{a_k}{\bar\lambda_{k-1}} D_f(y^k,x^k)
-
\tfrac{a_k(1-\bar\lambda_{k-1})}{\bar\lambda_{k-1}} D_f(x^k,y^k)
\\
&\qquad
-
A_{k-1} D_f(x^{k-1},x^k)
-
a_k D_f(x,y^k),
\end{aligned}
	\]
	with the convention that the terms involving $A_{-1}$ vanish, and that the two terms involving $\bar\lambda_{k-1}^{-1}$ vanish when $k=0$ or $\bar\lambda_{k-1}=0$, since then $y^k=x^k$.

    Since $z^k$ is computed from $\hat\theta^k$, it depends only on $g^0,\ldots,g^{k-1}$.
    Hence $z^k$ and $y^k$ are $\mathcal F_{k-1}$-measurable, and
\[
\mathbb E\!\left[
\braket{g^k,z^k-x}\mid\mathcal F_{k-1}
\right]
=
\braket{\nabla f(y^k),z^k-x}.
\]
Summing over $k=0,\dots,n-1$ and taking expectations gives
\[
\begin{aligned}
A_{n-1} \mathbb E[f(x^{n-1})-f(x)]
&\le
\mathbb E\left[\textstyle \sum_{k=0}^{n-1} a_k \braket{g^k, z^k-x}\right]
\\
&\qquad
- \textstyle \sum_{k=0}^{n-1}
\Bigl(
\tfrac{a_k}{\bar\lambda_{k-1}} \mathbb E D_f(y^k,x^k)
+
\tfrac{a_k(1-\bar\lambda_{k-1})}{\bar\lambda_{k-1}} \mathbb E D_f(x^k,y^k)
\Bigr)
\\
&\qquad
- \textstyle \sum_{k=0}^{n-1}
\Bigl(
A_{k-1}\mathbb E D_f(x^{k-1},x^k)
+ a_k\mathbb E D_f(x,y^k)
\Bigr).
\end{aligned}
\numberthis \label{eq:expectation}
\]
    \Cref{ass:convex,ass:holder-smooth} imply the generalized co-coercivity inequality
\[
D_f(u,v)
\ge
\tfrac{\nu}{1+\nu}L_\nu^{-1/\nu}
\norm{\nabla f(u)-\nabla f(v)}_*^{(1+\nu)/\nu}.
\]
    For $k\ge1$, let
	\[
	\begin{aligned}
	U_k&:=\norm{\nabla f(y^k)-\nabla f(x^k)}_*^{r_\nu},\\
V_k&:=\norm{\nabla f(x^k)-\nabla f(x^{k-1})}_*^{r_\nu},\\
W_k&:=\norm{\nabla f(y^k)-\nabla f(y^{k-1})}_*^{r_\nu}.
\end{aligned}
\]
The relation $x^0=y^0$ gives $U_0=0$.
Also,
\begin{equation}
\label{eq:holder-w-triangle}
	\begin{aligned}
	W_k
	&=
	\norm{
	\nabla f(y^k)-\nabla f(x^k)
	+\nabla f(x^k)-\nabla f(x^{k-1})
	+\nabla f(x^{k-1})-\nabla f(y^{k-1})
	}_*^{r_\nu}
	\\
	&\le
	K_\nu(U_k+V_k+U_{k-1}),
	\end{aligned}
\end{equation}
where $K_\nu=3^{r_\nu-1}$.
Moreover, $\bar\lambda_{k-1}\le c_\nu a_k/A_k$ implies $a_k/\bar\lambda_{k-1}\ge A_k/c_\nu$.
Let
	\[
	b_\nu:=\tfrac{\nu}{1+\nu},
	\qquad
	\beta_\nu:=\tfrac{b_\nu}{L_\nu^{1/\nu}}.
	\]
	Choose $c_\nu>0$ sufficiently small, depending only on $\nu$, and set $C_\nu:=c_\nu K_\nu$.
	Then, for $k\ge1$,
	\[
	\begin{aligned}
	&-\tfrac{a_k}{\bar\lambda_{k-1}} D_f(y^k,x^k)
	-
	A_{k-1} D_f(x^{k-1},x^k)
	\\
	&\qquad{\scriptstyle\mathrm{(a)}}\le
    -\tfrac{\beta_\nu A_k}{c_\nu}U_k
    -\beta_\nu A_{k-1}V_k
	\\
	&\qquad{\scriptstyle\mathrm{(b)}}\le
	-\tfrac{2C_\nu A_k}{L_\nu^{1/\nu}}U_k
	-\tfrac{C_\nu A_{k-1}}{L_\nu^{1/\nu}}V_k
	\\
	&\qquad{\scriptstyle\mathrm{(c)}}\le
	-\tfrac{C_\nu A_{k-1}}{L_\nu^{1/\nu}}(U_k+V_k+U_{k-1})
	-\tfrac{C_\nu A_k}{L_\nu^{1/\nu}}U_k
	+\tfrac{C_\nu A_{k-1}}{L_\nu^{1/\nu}}U_{k-1}
	\\
	&\qquad{\scriptstyle\mathrm{(d)}}\le
	-\tfrac{c_\nu A_{k-1}}{L_\nu^{1/\nu}}W_k
	-\tfrac{C_\nu A_k}{L_\nu^{1/\nu}}U_k
	+\tfrac{C_\nu A_{k-1}}{L_\nu^{1/\nu}}U_{k-1}.
    \end{aligned}
    \numberthis\label{eq:holder-negative-uv}
    \]
	Here (a) uses the co-coercivity lower bound and $a_k/\bar\lambda_{k-1}\ge A_k/c_\nu$; (b) uses the choice of $c_\nu$ satisfying $K_\nu c_\nu\le b_\nu$ and $2K_\nu c_\nu^2\le b_\nu$; (c) uses $A_{k-1}\le A_k$; and (d) uses \eqref{eq:holder-w-triangle} and $C_\nu=c_\nu K_\nu$.
	Adding back the additional nonpositive Bregman terms therefore gives, for each $k\ge1$,
	\[
	\begin{aligned}
	&-\tfrac{a_k}{\bar\lambda_{k-1}} D_f(y^k,x^k)
-
\tfrac{a_k(1-\bar\lambda_{k-1})}{\bar\lambda_{k-1}} D_f(x^k,y^k)
-
A_{k-1} D_f(x^{k-1},x^k)
-
	a_k D_f(x,y^k)
	\\
	&\qquad\le
	-
\tfrac{c_\nu A_{k-1}}{L_\nu^{1/\nu}}
W_k
-
\tfrac{C_\nu A_k}{L_\nu^{1/\nu}}
U_k
+
\tfrac{C_\nu A_{k-1}}{L_\nu^{1/\nu}}
U_{k-1},
	\end{aligned}
	\]
	Summing over $k=1,\dots,n-1$ telescopes the last two terms:
	\[
	\textstyle -\sum_{k=1}^{n-1}A_kU_k+\sum_{k=1}^{n-1}A_{k-1}U_{k-1}
=
-A_{n-1}U_{n-1}\le0,
\]
because $U_0=0$.
Dropping this nonpositive remainder and combining with \eqref{eq:expectation} gives \eqref{eq:soda-otb-holder}. For $\nu=1$, we have $K_1=3$ and $b_1=1/2$, so the choice $c_1=1/6$ satisfies $K_1c_1\le b_1$ and $2K_1c_1^2\le b_1$.
\end{proof}

\subsection{H\"older-smooth theorems}

\begin{thm}[Non-accelerated parameterization under H\"older smoothness]
\label{thm:soda-nonacc-holder-uniform}
Let $p>2$, let $q=p/(p-1)$, and let $\nu\in(0,1]$.
Let $x^\star\in\argmin_{x\in\operatorname{dom}h} f(x)$ and $D_\star:=h(x^\star)-\inf h$.
Consider \ref{eq:SODA}.
For every $k=0,\dots,n-1$, choose
\[
\alpha_k = \tfrac{1}{k+1},
\qquad
\bar\alpha_k = \lambda_k = \tfrac{1}{k+2},
\qquad
\bar\lambda_k \le c_\nu \lambda_k,
\qquad
\gamma_k = \eta(k+2),
\]
where $c_\nu>0$ is the constant from \Cref{lem:soda-otb-holder}.
Suppose \Cref{ass:convex,ass:soda-stochastic,ass:holder-smooth,ass:noise} hold with $q=p/(p-1)$.
Assume also that $h$ is $p$-uniformly convex with constant $\mu$ in the sense of \Cref{def:uniform-convexity}.
Define
\[
\kappa_{\rm na}
:=
\min\left\{1,\tfrac{1+(p+1)\nu}{p}\right\}.
\]
Set
\[
\ell_n :=
\begin{cases}
\log(\mathrm e n), & \nu=(p-1)/(p+1),\\
1, & \text{otherwise}.
\end{cases}
\]
Choose
\[
\eta
\asymp_{p,\nu}
\min\left\{
\tfrac{\mu^{(1+\nu)/p}D_\star^{(p-1-\nu)/p}}
{\tau_q^{1+\nu}L_\nu n^{1-\kappa_{\rm na}}\ell_n^{(p-1-\nu)/p}},
\tfrac{\mu^{1/p}D_\star^{(p-1)/p}}
{\sigma_q\,n^{(p-1)/p}}
\right\}.
\]
Then, for every $n\ge1$,
\[
\mathbb E[f(x^{n-1})-f(x^\star)]
=
O_{p,\nu}\!\left(
\tfrac{\tau_q^{1+\nu}L_\nu D_\star^{(1+\nu)/p}}
{\mu^{(1+\nu)/p}n^{\kappa_{\rm na}}}
+
\tfrac{\sigma_q D_\star^{1/p}}{\mu^{1/p}n^{1/p}}
\right),
\]
up to an additional factor $(\log(\mathrm e n))^{(p-1-\nu)/p}$ when $\nu=(p-1)/(p+1)$.
In particular, for $h(x)=\tfrac1p\norm{x-z^0}_p^p$ and $R_p:=\norm{x^\star-z^0}_p$, taking $\mu=2^{2-p}$ gives
\[
\mathbb E[f(x^{n-1})-f(x^\star)]
=
O_{p,\nu}\!\left(
\tfrac{\tau_q^{1+\nu}L_{\nu,p} R_p^{1+\nu}}{n^{\kappa_{\rm na}}}
+
\tfrac{\sigma_q R_p}{n^{1/p}}
\right),
\]
with the same boundary logarithm.
\end{thm}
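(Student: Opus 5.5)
Take weights $a_k=1$, so $A_k=k+1$. Then $\alpha_k=a_k/A_k=1/(k+1)$, $\bar\alpha_k=a_{k+1}/A_{k+1}=1/(k+2)$ and $\lambda_{k-1}=a_k/A_k=1/(k+1)$, i.e.\ $\lambda_k=1/(k+2)$. Also $\gamma_k=\eta A_k/(1-\bar\alpha_k)=\eta(k+1)(k+2)/(k+1)=\eta(k+2)$. These match the parameters in the statement, so \Cref{lem:oda-regret} and \Cref{lem:soda-otb-holder} both apply. Chaining them at $x=x^\star$ gives
\[
n\,\mathbb E[f(x^{n-1})-f^\star]\le \tfrac{D_\star}{\eta}+\tfrac{\eta^{q-1}}{q\mu^{q-1}}\textstyle\sum_{k}\mathbb E\norm{g^k-g^{k-1}}_*^q-\tfrac{c_\nu}{L_\nu^{1/\nu}}\textstyle\sum_k k\,\mathbb E W_k .
\]

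Next apply \Cref{ass:noise} together with $(a+b)^q\le 2^{q-1}(a^q+b^q)$. This gives
\[
\mathbb E\norm{g^k-g^{k-1}}_*^q\lesssim \tau_q^q\,\mathbb E\norm{\nabla f(y^k)-\nabla f(y^{k-1})}_*^q+\sigma_q^q .
\]
The $\sigma_q$ part contributes $\eta^{q-1}\mu^{1-q}\sigma_q^q n$. Dividing the bound by $n$ and balancing $D_\star/(\eta n)$ against $\eta^{q-1}\mu^{1-q}\sigma_q^q$ yields the second choice of $\eta$ and the rate $\sigma_q D_\star^{1/p}\mu^{-1/p}n^{-1/p}$. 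This uses $q-1=1/(p-1)$, so that $(\cdot)^{1/q}=(\cdot)^{(p-1)/p}$.

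The main step is absorbing the gradient-difference term. Since $q\le 2\le r_\nu=(1+\nu)/\nu$, I write $G_k:=\norm{\nabla f(y^k)-\nabla f(y^{k-1})}_*$ and apply Young's inequality with exponents $r_\nu/q$ and its conjugate $s:=r_\nu/(r_\nu-q)$:
\[
\tfrac{\eta^{q-1}\tau_q^q}{\mu^{q-1}}G_k^q\le \tfrac{c_\nu k}{L_\nu^{1/\nu}}G_k^{r_\nu}+C\Bigl(\tfrac{\eta^{q-1}\tau_q^q}{\mu^{q-1}}\Bigr)^{s}\Bigl(\tfrac{L_\nu^{1/\nu}}{k}\Bigr)^{q s/r_\nu}.
\]
Jensen lets this be applied in expectation. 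The first term cancels against the negative term supplied by \Cref{lem:soda-otb-holder}. At $k=0$ there is no negative term. There I bound $G_0$ directly, since $y^{-1}$ is conventionally $y^0$ or $g^{-1}=0$, and either way it is $O(\text{const})$ and absorbed.

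What remains is $\sum_{k=1}^{n-1}k^{-qs/r_\nu}$ times $\eta^{(q-1)s}L_\nu^{qs/(\nu r_\nu)}$. Computing the exponents:
\begin{itemize}
\item $r_\nu-q=\tfrac{1+\nu}{\nu}-\tfrac{p}{p-1}=\tfrac{p-1-\nu}{\nu(p-1)}$;
\item $s=\tfrac{(1+\nu)(p-1)}{p-1-\nu}$;
\item $qs/r_\nu=\tfrac{p\nu}{p-1-\nu}$.
\end{itemize}
So the sum is bounded by $n^{1-p\nu/(p-1-\nu)}$, by $\log(en)$, or by a constant, according to whether $p\nu$ is less than, equal to, or greater than $p-1-\nu$. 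The boundary case is exactly $\nu=(p-1)/(p+1)$, which is the $\ell_n$ case in the statement.

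Balancing $D_\star/\eta$ against $\eta^{(q-1)s}L_\nu^{(1+\nu)(p-1)/(p-1-\nu)}\mu^{-(q-1)s}\tau_q^{qs}\,n^{\max(0,1-p\nu/(p-1-\nu))}\ell_n$ gives the first entry of the $\eta$ minimum. Here $(q-1)s=(1+\nu)/(p-1-\nu)$, so $1+(q-1)s=p/(p-1-\nu)$. Solving for $\eta$ gives the scale $\mu^{(1+\nu)/p}D_\star^{(p-1-\nu)/p}/(\tau_q^{1+\nu}L_\nu)$ with the stated $n$- and $\ell_n$-powers. The resulting bound on $f$ is $\tau_q^{1+\nu}L_\nu D_\star^{(1+\nu)/p}\mu^{-(1+\nu)/p}n^{-\kappa_{\rm na}}$. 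To confirm the rate exponent I check it against the sum exponent. Multiplying the deterministic term $D_\star/(\eta n)$ out gives $n^{-1}\cdot(n^{e}\ell_n)^{(p-1-\nu)/p}$ with $e=\max(0,1-p\nu/(p-1-\nu))$. This equals $n^{-\min(1,(1+(p+1)\nu)/p)}$, because $1-(p-1-\nu-p\nu)/p=(1+(p+1)\nu)/p$. That is exactly $\kappa_{\rm na}$, with the boundary logarithm to the power $(p-1-\nu)/p$.

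Taking $\eta$ as the minimum of the two choices makes each term dominated by its balanced value, which is standard. The specialization to $h=\tfrac1p\norm{x-z^0}_p^p$ then follows from $D_\star=R_p^p/p$ and $\mu=2^{2-p}$.

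The delicate point I expect is the Young-inequality exponent bookkeeping and the $k=0$ term. I would carry these out with explicit constants depending only on $(p,\nu)$.
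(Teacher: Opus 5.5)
Your route is the paper's. You use unit weights, chain \Cref{lem:oda-regret} with \Cref{lem:soda-otb-holder}, split \Cref{ass:noise} via $(a+b)^q$, and apply the scalar Young bound with exactly the exponents $r_\nu/(r_\nu-q)$ and $q/(r_\nu-q)=p\nu/(p-1-\nu)$. The trichotomy for $\sum_k k^{-p\nu/(p-1-\nu)}$ (boundary $\nu=(p-1)/(p+1)$) and the two balancings also match. There is one bookkeeping slip. In your balancing display the power of $L_\nu$ should be $qs/(\nu r_\nu)=p/(p-1-\nu)$, as you had it a line earlier, not $s=(1+\nu)(p-1)/(p-1-\nu)$. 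Only the former gives the $1/L_\nu$ scaling you then state.

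The step that does not go through as written is the $k=0$ startup term. Because \Cref{lem:oda-regret} fixes $g^{-1}=0$, this term is $\frac{\eta^{q-1}}{q\mu^{q-1}}\mathbb E\norm{g^0}_*^q$, and it has no negative counterpart. You cannot simply declare $y^{-1}=y^0$. Combined with $g^{-1}=0$, \Cref{ass:noise} at $k=0$ would then read $\mathbb E\norm{g^0}_*^q\le\sigma_q^q$, a hidden assumption that the initial gradient is at noise level. Nor is the term ``$O(\text{const})$'' in the sense the theorem needs. The final bound involves only $D_\star,\mu,L_\nu,\tau_q,\sigma_q,n$, with $O_{p,\nu}$ hiding $(p,\nu)$-dependent constants. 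So a problem-dependent gradient norm at $y^0$ must be controlled, and absorbing it imposes a condition on $\eta$.

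The paper's fix is as follows. It uses $y^{-1}=x^\star$, so the term is $\lesssim_q\frac{\eta^{q-1}}{\mu^{q-1}}\bigl(\tau_q^q\,\mathbb E\norm{\nabla f(y^0)-\nabla f(x^\star)}_*^q+\sigma_q^q\bigr)$. The $\sigma_q^q$ piece is just one more noise term. For the other piece, note that $y^0=z^0\in\partial h^*(0)$ minimizes $h$. Then \Cref{def:uniform-convexity} with subgradient $0$ gives $\norm{x^\star-z^0}\le(pD_\star/\mu)^{1/p}$. \Cref{ass:holder-smooth} then gives $\norm{\nabla f(y^0)-\nabla f(x^\star)}_*^q\le L_\nu^q(pD_\star/\mu)^{\nu q/p}$. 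The resulting contribution is $\lesssim_p D_\star/\eta$ as soon as $\eta\lesssim_p\mu^{(1+\nu)/p}D_\star^{(p-1-\nu)/p}/(\tau_qL_\nu)$. The deterministic branch of your stepsize already enforces this, since $\tau_q\ge1$, $\kappa_{\rm na}\le1$ and $\ell_n\ge1$. With this inserted, your proof is complete.
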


\begin{proof}
Choose the weights $a_k\equiv1$, so that $A_k=k+1$.
Combining \Cref{lem:oda-regret} with \Cref{lem:soda-otb-holder} and taking expectations gives
\[
\begin{aligned}
n\,\mathbb E[f(x^{n-1})-f(x^\star)]
&\le
\tfrac{D_\star}{\eta}
+
\tfrac{\eta^{q-1}}{q\mu^{q-1}}
\textstyle\sum_{k=0}^{n-1}
\mathbb E\norm{g^k-g^{k-1}}_*^q
\\
&\qquad
-
\tfrac{c_\nu}{L_\nu^{1/\nu}}
\textstyle\sum_{k=1}^{n-1} k\,
\mathbb E\!\left[
\norm{\nabla f(y^k)-\nabla f(y^{k-1})}_*^{r_\nu}
\right].
\end{aligned}
\]
We next split this last sum into stochastic and deterministic-gradient variation terms.
For $k\ge1$, raising \Cref{ass:noise} to the $q$th power and using $(a+b)^q\lesssim_q a^q+b^q$ gives
\[
\mathbb E\norm{g^k-g^{k-1}}_*^q
\lesssim_q
\tau_q^q
\mathbb E\norm{\nabla f(y^k)-\nabla f(y^{k-1})}_*^q
+
\sigma_q^q .
\]
For $k=0$, using the boundary conventions $g^{-1}=0$ and $y^{-1}=x^\star$, \Cref{ass:noise} gives the bound on the startup term
\[
\mathbb E\norm{g^0-g^{-1}}_*^q
\lesssim_q
\tau_q^q
\mathbb E\norm{\nabla f(y^0)-\nabla f(x^\star)}_*^q
+
\sigma_q^q .
\]
Absorbing only $q$-dependent constants, we obtain
\[
\begin{aligned}
n\,\mathbb E[f(x^{n-1})-f(x^\star)]
&\lesssim_q
\tfrac{D_\star}{\eta}
+
\tfrac{\eta^{q-1}\sigma_q^q n}{q\mu^{q-1}}
\\
&\qquad
+
\tfrac{\tau_q^q\eta^{q-1}}{q\mu^{q-1}}
\textstyle \sum_{k=1}^{n-1}
\mathbb E\!\left[
\norm{\nabla f(y^k)-\nabla f(y^{k-1})}_*^q
\right]
\\
&\qquad
-
\tfrac{c_\nu}{L_\nu^{1/\nu}}
\textstyle \sum_{k=1}^{n-1} k\,
\mathbb E\!\left[
\norm{\nabla f(y^k)-\nabla f(y^{k-1})}_*^{r_\nu}
\right]
\\
&\qquad
+
\tfrac{\tau_q^q\eta^{q-1}}{q\mu^{q-1}}
\mathbb E\!\left[
\norm{\nabla f(y^0)-\nabla f(x^\star)}_*^q
\right].
\end{aligned}
\numberthis\label{eq:nonacc-holder-noise-split}
\]
The last term of \eqref{eq:nonacc-holder-noise-split} is dominated by the first term under the stated choice of $\eta$, since uniform convexity gives $\norm{y^0-x^\star}=O_p((D_\star/\mu)^{1/p})$, and \Cref{ass:holder-smooth} gives
\[
\norm{\nabla f(y^0)-\nabla f(x^\star)}_*^q
\le
L_\nu^q\norm{y^0-x^\star}^{\nu q}
=
O_p\!\left(L_\nu^q(D_\star/\mu)^{\nu q/p}\right).
\]
Hence the last term of \eqref{eq:nonacc-holder-noise-split} is at most
\[
O_p\!\left(
\tfrac{\tau_q^q\eta^{q-1}L_\nu^qD_\star^{\nu q/p}}
{\mu^{q-1+\nu q/p}}
\right).
\]
To compare with the first term $D_\star/\eta$, it is enough that
\[
\tfrac{\tau_q^q\eta^{q-1}L_\nu^qD_\star^{\nu q/p}}
{\mu^{q-1+\nu q/p}}
\lesssim_p
\tfrac{D_\star}{\eta}
\quad\Longleftrightarrow\quad
\eta^q
\lesssim_p
\tfrac{\mu^{q-1+\nu q/p}D_\star^{1-\nu q/p}}
{\tau_q^qL_\nu^q}.
\]
Taking $q$th roots, and using $q=p/(p-1)$, this becomes
\[
\eta
\lesssim_p
\tfrac{\mu^{(1+\nu)/p}D_\star^{(p-1-\nu)/p}}{\tau_q L_\nu},
\]
which is enforced by the deterministic part of the chosen stepsize since $\tau_q\ge1$.

To deal with the two remaining terms of \eqref{eq:nonacc-holder-noise-split}, for $k\ge1$,
\[
\sup_{s\ge0}\{A s^q-Bs^{r_\nu}\}
\le
C_{p,\nu} A^{r_\nu/(r_\nu-q)}B^{-q/(r_\nu-q)}
\qquad
\text{for all } A,B>0,
\]
where $C_{p,\nu}$ is a finite constant depending only on $p$ and $\nu$.
This follows by optimizing over $s$.
Since $r_\nu=(1+\nu)/\nu>q=p/(p-1)$, the supremum is finite and the nonzero maximizer satisfies $s^{r_\nu-q}=qA/(r_\nu B)$.
Applying this bound with
\[
A=\tfrac{\tau_q^q\eta^{q-1}}{\mu^{q-1}},
\qquad
B=\tfrac{k}{L_\nu^{1/\nu}},
\]
yields
\[
\sup_{s\ge0}
\left\{
\tfrac{\tau_q^q\eta^{q-1}}{\mu^{q-1}}s^q
-
\tfrac{k}{L_\nu^{1/\nu}}s^{r_\nu}
\right\}
\le
C_{p,\nu}
\tfrac{\tau_q^{p(1+\nu)/(p-1-\nu)}
L_\nu^{p/(p-1-\nu)}}{\mu^{(1+\nu)/(p-1-\nu)}}
\eta^{(1+\nu)/(p-1-\nu)}
k^{-p\nu/(p-1-\nu)} .
\numberthis\label{eq:nonacc-holder-young-bound}
\]
The exponents in this expression come from
\[
r_\nu-q=\tfrac{p-1-\nu}{\nu(p-1)},
\qquad
\tfrac{r_\nu}{r_\nu-q}=\tfrac{(1+\nu)(p-1)}{p-1-\nu},
\qquad
\tfrac{q}{r_\nu-q}=\tfrac{p\nu}{p-1-\nu}.
\]
To keep the summed bound readable, define
\[
\theta_\nu:=\tfrac{1+\nu}{p-1-\nu},
\qquad
\zeta_\nu:=\tfrac{p\nu}{p-1-\nu},
\qquad
\mathfrak S_{p,\nu,n}:=1+\textstyle\sum_{k=1}^{n-1}k^{-\zeta_\nu},
\qquad
M_\nu:=
\tfrac{L_\nu^{p/(p-1-\nu)}}
{\mu^{(1+\nu)/(p-1-\nu)}},
\]
With this notation, \eqref{eq:nonacc-holder-young-bound} applies pointwise with
\[
s_k:=\norm{\nabla f(y^k)-\nabla f(y^{k-1})}_* .
\]
Combining \eqref{eq:nonacc-holder-young-bound} with the corresponding positive and negative variation terms in \eqref{eq:nonacc-holder-noise-split} gives, for each $k\ge1$,
\[
\begin{aligned}
\tfrac{\tau_q^q\eta^{q-1}}{q\mu^{q-1}}
\mathbb E[s_k^q]
-
\tfrac{c_\nu}{L_\nu^{1/\nu}}k\,
\mathbb E[s_k^{r_\nu}]
\le
O_{p,\nu}\!\left(
\tau_q^{p(1+\nu)/(p-1-\nu)}
M_\nu\eta^{\theta_\nu}k^{-\zeta_\nu}
\right).
\end{aligned}
\]
Thus the positive and negative variation sums in \eqref{eq:nonacc-holder-noise-split} contribute at most the sum of these remainders.
Summing over $k$ and dividing by $n$ gives
\begin{equation}
\label{eq:nonacc-holder-rate-before-balance}
\mathbb E[f(x^{n-1})-f(x^\star)]
\le
O_{p,\nu}\!\left(
\tfrac{D_\star}{\eta n}
+
\tfrac{\eta^{q-1}\sigma_q^q}{\mu^{q-1}}
+
\tau_q^{p(1+\nu)/(p-1-\nu)}M_\nu\eta^{\theta_\nu}\tfrac1n
\mathfrak S_{p,\nu,n}
\right).
\end{equation}
Equating the deterministic and H\"older-variation terms in \eqref{eq:nonacc-holder-rate-before-balance}, and multiplying by $n$, gives
\[
\tfrac{D_\star}{\eta}
\asymp_{p,\nu}
\tau_q^{p(1+\nu)/(p-1-\nu)}
M_\nu\eta^{\theta_\nu}\mathfrak S_{p,\nu,n}
\quad\Longleftrightarrow\quad
\eta^{1+\theta_\nu}
\asymp_{p,\nu}
\tfrac{D_\star}
{\tau_q^{p(1+\nu)/(p-1-\nu)}M_\nu\mathfrak S_{p,\nu,n}}.
\]
Since
\[
1+\theta_\nu=\tfrac{p}{p-1-\nu},
\]
this gives
\[
\eta_{\rm det}
\asymp_{p,\nu}
\tfrac{\mu^{(1+\nu)/p}D_\star^{(p-1-\nu)/p}}
{\tau_q^{1+\nu}L_\nu \mathfrak S_{p,\nu,n}^{(p-1-\nu)/p}},
\]
where the power $\tau_q^{1+\nu}$ comes from
\[
\tfrac{p(1+\nu)}{p-1-\nu}\cdot\tfrac{p-1-\nu}{p}
=1+\nu.
\]
Substituting this value of $\eta_{\rm det}$ into the deterministic term $D_\star/(\eta n)$ in \eqref{eq:nonacc-holder-rate-before-balance} gives
\begin{equation}
\label{eq:nonacc-holder-det-sharp}
O_{p,\nu}\!\left(
\tfrac{\tau_q^{1+\nu}L_\nu D_\star^{(1+\nu)/p}}
{\mu^{(1+\nu)/p}}
\tfrac{\mathfrak S_{p,\nu,n}^{(p-1-\nu)/p}}{n}
\right).
\end{equation}
The cruder $O(n^{-\kappa_{\rm na}})$ rate then follows by applying the standard integral comparison
\[
\mathfrak S_{p,\nu,n}
=1+\textstyle\sum_{k=1}^{n-1}k^{-\zeta_\nu}
\lesssim_{p,\nu}
\begin{cases}
n^{1-\zeta_\nu}, & \zeta_\nu<1,\\
\log(\mathrm e n), & \zeta_\nu=1,\\
1, & \zeta_\nu>1,
\end{cases}
\]
where the boundary $\zeta_\nu=1$ is exactly $\nu=(p-1)/(p+1)$.
Consequently, if $\zeta_\nu<1$, then
\[
\tfrac{\mathfrak S_{p,\nu,n}^{(p-1-\nu)/p}}{n}
\lesssim_{p,\nu}
n^{-\kappa_{\rm na}},
\]
while for $\zeta_\nu>1$ this factor is $O_{p,\nu}(n^{-1})$, and at $\zeta_\nu=1$ it is $n^{-1}(\log(\mathrm e n))^{(p-1-\nu)/p}$.
Thus the deterministic contribution is
\[
O_{p,\nu}\!\left(
\tfrac{\tau_q^{1+\nu}L_\nu D_\star^{(1+\nu)/p}}
{\mu^{(1+\nu)/p}n^{\kappa_{\rm na}}}
\right),
\]
up to the logarithmic factor at the boundary.

Equating the deterministic and stochastic terms in \eqref{eq:nonacc-holder-rate-before-balance} gives
\[
\tfrac{D_\star}{\eta n}
\asymp_p
\tfrac{\eta^{q-1}\sigma_q^q}{\mu^{q-1}},
\qquad\text{hence}\qquad
\eta^q
\asymp_p
\tfrac{\mu^{q-1}D_\star}{\sigma_q^q n}.
\]
Using $q=p/(p-1)$, this yields
\[
\eta_{\rm stoch}
\asymp_p
\tfrac{\mu^{1/p}D_\star^{(p-1)/p}}
{\sigma_q n^{(p-1)/p}},
\]
Plugging this into the deterministic term of \eqref{eq:nonacc-holder-rate-before-balance} gives the stochastic rate $\sigma_qD_\star^{1/p}/(\mu^{1/p}n^{1/p})$.
\end{proof}

\begin{cor}[Near $p=\infty$ non-accelerated parameterization]
\label{cor:soda-near-linf}
Let $d$ denote the ambient dimension, or the effective rank in the Schatten case, and set
\[
p_d:=\max\{3,\lceil \log(\mathrm e d)\rceil\},
\qquad
q_d:=\tfrac{p_d}{p_d-1}.
\]
Let $x^\star\in\argmin_{x\in\operatorname{dom}h} f(x)$ and $R_\infty:=\norm{x^\star-z^0}_\infty$.
Run \ref{eq:SODA} with the finite $p_d$ geometry $h(x)=\tfrac1{p_d}\norm{x-z^0}_{p_d}^{p_d}$ and the non-accelerated parameters from \Cref{thm:soda-nonacc-holder-uniform} with $\nu=1$.
Suppose the assumptions of \Cref{thm:soda-nonacc-holder-uniform}, except for \Cref{ass:holder-smooth}, hold for this choice of $p_d$, $q_d$, and $h$.
Assume instead that $f$ is $L_\infty$-smooth with respect to $\norm{\cdot}_\infty$.
Then
\[
\mathbb E[f(x^{n-1})-f(x^\star)]
\le
C_{\tau_{q_d}}
\left(
\tfrac{L_\infty R_\infty^2
\min\{\log(\mathrm e n),\log(\mathrm e^3 d)\}}{n}
+
\tfrac{\sigma_{q_d}R_\infty}{n^{1/p_d}}
\right).
\]
\end{cor}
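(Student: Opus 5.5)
The plan is to specialize the proof of \Cref{thm:soda-nonacc-holder-uniform} to $\nu=1$ and $p=p_d$, keeping its sharper intermediate bound \eqref{eq:nonacc-holder-det-sharp} rather than the cruder $n^{-\kappa_{\rm na}}$ rate. Then I will convert every $p_d$-geometry quantity into $\infty$-geometry quantities.

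\textbf{Step 1: bound the sum $\mathfrak S$.} With $\nu=1$ we have $\zeta_1=p/(p-2)>1$, since $p_d\ge3$. Two estimates on $\mathfrak S_{p,1,n}=1+\sum_{k=1}^{n-1}k^{-\zeta_1}$ hold simultaneously.
\begin{itemize}
\item The integral comparison gives $\mathfrak S\le 2+1/(\zeta_1-1)=2+(p-2)/2$.
\item Since $\zeta_1\ge1$, we also have $\mathfrak S\le 1+\sum_{k=1}^{n-1}k^{-1}\le 2+\log n$.
\end{itemize}
The exponent $(p-1-\nu)/p=(p-2)/p<1$ and $\mathfrak S\ge1$, so $\mathfrak S^{(p-2)/p}\le\mathfrak S$. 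Because $p_d\le \log(\mathrm e d)+1$, this yields
\[
\mathfrak S^{(p-2)/p}\lesssim\min\{\log(\mathrm e n),\log(\mathrm e^3 d)\},
\]
with an absolute constant. The slack constants $2$ and $1$ are absorbed by choosing the $\mathrm e^3$ inside the logarithm.

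\textbf{Step 2: convert to the $\infty$-geometry.} Take $h=\tfrac1{p}\norm{\cdot-z^0}_p^p$ with $p=p_d$, so that $\mu=2^{2-p}$ and $D_\star=R_p^p/p$.
\begin{itemize}
\item Radius factor: $(D_\star/\mu)^{2/p}=R_p^2\,p^{-2/p}\,2^{2-4/p}\le 4R_p^2$.
\item Dimension factor: $R_p\le d^{1/p}R_\infty\le \mathrm e R_\infty$, because $p_d\ge\log d$.
\item Smoothness: the chain of norm inequalities displayed in the regime section gives $L_p\le L_\infty$.
\end{itemize}
The deterministic term $\tfrac{L_pD_\star^{2/p}}{\mu^{2/p}}\tfrac{\mathfrak S^{(p-2)/p}}{n}$ therefore becomes $\lesssim L_\infty R_\infty^2\min\{\cdots\}/n$. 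The stochastic term satisfies
\[
\frac{\sigma_q D_\star^{1/p}}{\mu^{1/p}n^{1/p}}\le \frac{2\sigma_qR_p}{n^{1/p}}\le \frac{2\mathrm e\,\sigma_{q_d}R_\infty}{n^{1/p_d}}.
\]
Together these give the claimed bound.

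\textbf{Main obstacle: uniformity of constants in $p$.} The theorem's constants are stated as $O_{p,\nu}$, but here $p=p_d$ grows with $d$, so I must check that they are uniformly bounded for $p\ge3$ at $\nu=1$. I would re-walk the proof and check each source of a constant.
\begin{itemize}
\item The inequality $(a+b)^q\le2^{q-1}(a^q+b^q)$ holds with $q\in(1,3/2]$, so its constant is bounded.
\item The startup term is controlled by $\norm{y^0-x^\star}\le R_p$ directly.
\item The Young-type constant comes from $\sup_s\{As^q-Bs^2\}=(1-q/2)(q/2)^{q/(2-q)}A^{2/(2-q)}B^{-q/(2-q)}$. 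Its prefactor is bounded for $q\in(1,3/2]$.
\item The $\tau_q$ powers $\tau_q^{2p/(p-2)}$ and $\tau_q^{2}$ are bounded by powers of $\tau_q$ with exponent at most $6$. This is exactly why the constant is allowed to depend on $\tau_{q_d}$.
\end{itemize}
Finally, $\eta$ is chosen as the minimum of $\eta_{\rm det}$, computed with the actual $\mathfrak S$, and $\eta_{\rm stoch}$. The balancing step then costs only a factor $2$, which completes the argument.
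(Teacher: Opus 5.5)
Your proposal is correct and follows essentially the same route as the paper. Like the paper, you apply the non-accelerated theorem at $\nu=1$, $p=p_d$, keep the sharp deterministic term with $\mathfrak S_{p_d,1,n}^{(p_d-2)/p_d}$, bound $\mathfrak S$ both by the integral test ($\lesssim p_d$) and by the harmonic series ($\lesssim \log(\mathrm e n)$), and convert $R_{p_d}\le \mathrm e R_\infty$ and $L_{p_d}\le L_\infty$; your explicit check that the $p$-dependent constants stay bounded for $q_d\in(1,3/2]$ is a more detailed version of the paper's one-line assertion.

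One small slip: $p_d\le\log(\mathrm e d)+1$ fails for $d\le 2$, where $p_d=3$. This is harmless, because $p_d\le\max\{3,2+\log d\}\le\log(\mathrm e^3 d)$ always holds, and that is all you actually use.
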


\begin{proof}
We apply \Cref{thm:soda-nonacc-holder-uniform} with $\nu=1$ and $p=p_d$.
The radius and smoothness constant can be controlled by their $\ell_\infty$ analogues.
The radius satisfies
\[
\norm{x^\star-z^0}_{p_d}
\le d^{1/p_d}\norm{x^\star-z^0}_\infty
\le \mathrm e R_\infty .
\]
The smoothness conversion uses the dual-norm inequality $\norm{\cdot}_{q_d}\le\norm{\cdot}_1$.
Therefore
\[
\norm{\nabla f(x)-\nabla f(y)}_{q_d}
\le
\norm{\nabla f(x)-\nabla f(y)}_1
\le
L_\infty\norm{x-y}_\infty
\le
L_\infty\norm{x-y}_{p_d}.
\]
Thus the finite-$p_d$ theorem applies with $R_{p_d}\le \mathrm e R_\infty$ and $L_{p_d}\le L_\infty$.
By the explicit deterministic contribution in \eqref{eq:nonacc-holder-det-sharp}, with $\nu=1$ and $p=p_d$, the constants depending on $q_d\in[1,3/2]$ and $\mu=2^{2-p_d}$ are uniformly bounded.
The remaining explicit $p_d$ factor is
\[
\mathfrak S_{p_d,1,n}^{(p_d-2)/p_d},
\qquad
\textstyle \mathfrak S_{p_d,1,n}:=1+\sum_{k=1}^{n-1} k^{-p_d/(p_d-2)} .
\]
Since $p_d/(p_d-2)=1+2/(p_d-2)$, an integral test gives
\[
\textstyle \sum_{k=1}^{n-1} k^{-p_d/(p_d-2)}
\le
1+\int_1^\infty x^{-1-2/(p_d-2)}\,dx
=
1+\tfrac{p_d-2}{2}.
\]
The same sum is also bounded by the harmonic series, $\sum_{k=1}^{n-1}k^{-1}\le\log(\mathrm e n)$.
Since
\[
\mathfrak S_{p_d,1,n}
\le
1+\min\{\log(\mathrm e n),1+\tfrac{p_d-2}{2}\},
\]
we have
\[
\mathfrak S_{p_d,1,n}^{(p_d-2)/p_d}
\lesssim
\min\{\log(\mathrm e n),p_d\}
\lesssim
\min\{\log(\mathrm e n),\log(\mathrm e^3d)\}.
\]
\end{proof}
\begin{remark}
This corollary uses a finite approximation to $\infty$-norm.
This matches practice for Schatten-$\infty$, where the matrix sign operation is typically not computed exactly, but approximated by an iterative solver such as Newton--Schulz.
\end{remark}
We remark that it is also possible to obtain a similar result for the accelerated parameterization in \Cref{thm:soda-acc-holder-uniform}.
However, we do not pursue this here as it does not lead to an improvement in the asymptotic rate, since the $O(1/n)$ term is optimal for $p=\infty$ in the smooth convex deterministic setting \citep[Cor.~1]{guzman2018lower}.

\begin{thm}[Accelerated parameterization under H\"older smoothness]
\label{thm:soda-acc-holder-uniform}
Let $p>2$, let $q=p/(p-1)$, and let $\nu\in(0,1]$.
Let $x^\star\in\argmin_{x\in\operatorname{dom}h} f(x)$ and $D_\star:=h(x^\star)-\inf h$.
Consider \ref{eq:SODA}.
For every $k=0,\dots,n-1$, choose
\[
\alpha_k = \tfrac{2}{k+2},
\qquad
\bar\alpha_k = \lambda_k = \tfrac{2}{k+3},
\qquad
\bar\lambda_k \le c_\nu \lambda_k,
\qquad
\gamma_k = \eta \tfrac{(k+2)(k+3)}{2},
\]
where $c_\nu>0$ is the constant from \Cref{lem:soda-otb-holder}.
Suppose \Cref{ass:convex,ass:soda-stochastic,ass:holder-smooth,ass:noise} hold with $q=p/(p-1)$.
Assume also that $h$ is $p$-uniformly convex with constant $\mu$ in the sense of \Cref{def:uniform-convexity}.
Define
\[
\kappa_{\rm acc}:=\tfrac{1+(p+1)\nu}{p}.
\]
Choose
\[
\eta
\asymp_{p,\nu}
\min\left\{
\tfrac{\mu^{(1+\nu)/p}D_\star^{(p-1-\nu)/p}}
{\tau_q^{1+\nu}L_\nu n^{2-\kappa_{\rm acc}}},
\tfrac{\mu^{1/p}D_\star^{(p-1)/p}}
{\sigma_q\,n^{(2p-1)/p}}
\right\}.
\]
Then, for every $n\ge1$,
\[
\mathbb E[f(x^{n-1})-f(x^\star)]
=
O_{p,\nu}\!\left(
\tfrac{\tau_q^{1+\nu}L_\nu D_\star^{(1+\nu)/p}}
{\mu^{(1+\nu)/p}n^{\kappa_{\rm acc}}}
+
\tfrac{\sigma_q D_\star^{1/p}}{\mu^{1/p}n^{1/p}}
\right).
\]
In particular, for $h(x)=\tfrac1p\norm{x-z^0}_p^p$ and $R_p:=\norm{x^\star-z^0}_p$, taking $\mu=2^{2-p}$ gives
\[
\mathbb E[f(x^{n-1})-f(x^\star)]
=
O_{p,\nu}\!\left(
\tfrac{\tau_q^{1+\nu}L_{\nu,p} R_p^{1+\nu}}{n^{\kappa_{\rm acc}}}
+
\tfrac{\sigma_q R_p}{n^{1/p}}
\right).
\]
\end{thm}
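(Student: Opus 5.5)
The plan mirrors the proof of \Cref{thm:soda-nonacc-holder-uniform}, with linear weights in place of constant ones. Take $a_k=k+1$, so $A_k=(k+1)(k+2)/2$. Then
\[
\alpha_k=\tfrac{a_k}{A_k}=\tfrac{2}{k+2},
\qquad
\bar\alpha_k=\lambda_k=\tfrac{a_{k+1}}{A_{k+1}}=\tfrac{2}{k+3},
\qquad
\tfrac{A_k}{1-\bar\alpha_k}=\tfrac{(k+2)(k+3)}{2}.
\]
So the stated $\gamma_k$ is exactly $\eta A_k/(1-\bar\alpha_k)$, matching \Cref{lem:oda-regret}. Likewise $\lambda_{k-1}=a_k/A_k$ and $\bar\lambda_{k-1}\le c_\nu a_k/A_k$, matching \Cref{lem:soda-otb-holder}. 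Combining the two lemmas with $x=x^\star$ gives
\[
A_{n-1}\mathbb E[f(x^{n-1})-f(x^\star)]
\le
\tfrac{D_\star}{\eta}
+\tfrac{\eta^{q-1}}{q\mu^{q-1}}\textstyle\sum_k a_k^q\,\mathbb E\norm{g^k-g^{k-1}}_*^q
-\tfrac{c_\nu}{L_\nu^{1/\nu}}\textstyle\sum_k A_{k-1}\,\mathbb E s_k^{r_\nu},
\]
where $s_k=\norm{\nabla f(y^k)-\nabla f(y^{k-1})}_*$.

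Next I split the gradient variation using \Cref{ass:noise}, exactly as in \eqref{eq:nonacc-holder-noise-split}. This produces three pieces:
\begin{itemize}
\item a noise term $\eta^{q-1}\sigma_q^q\mu^{1-q}\sum_k a_k^q\asymp \eta^{q-1}\sigma_q^q\mu^{1-q}n^{q+1}$;
\item a startup term at $k=0$, which is handled as before through $\norm{y^0-x^\star}\lesssim(D_\star/\mu)^{1/p}$ and is dominated by $D_\star/\eta$ under the deterministic stepsize cap;
\item for each $k\ge1$, the competition $\tau_q^q\eta^{q-1}\mu^{1-q}(k+1)^q s_k^q-c_\nu L_\nu^{-1/\nu}A_{k-1}s_k^{r_\nu}$.
\end{itemize}
For the third piece I apply the same Young-type supremum bound. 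Take $A\propto \tau_q^q\eta^{q-1}\mu^{1-q}k^q$ and $B\propto k^2/L_\nu^{1/\nu}$. Using $r_\nu/(r_\nu-q)=(1+\nu)(p-1)/(p-1-\nu)$ and $q/(r_\nu-q)=p\nu/(p-1-\nu)$, the remainder is
\[
\tau_q^{p(1+\nu)/(p-1-\nu)}M_\nu\,\eta^{\theta_\nu}\,k^{\rho},
\qquad
\rho=\tfrac{q r_\nu-2q}{r_\nu-q}=\tfrac{p(1-\nu)}{p-1-\nu}.
\]
This exponent is always positive (it equals $0$ at $\nu=1$), so there is no summability question. Summing gives $\mathfrak S\asymp n^{1+\rho}$, with no boundary logarithm. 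This is why the theorem has $\kappa_{\rm acc}$ rather than a minimum.

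Dividing by $A_{n-1}\asymp n^2$ yields
\[
\mathbb E[f(x^{n-1})-f(x^\star)]
\lesssim
\tfrac{D_\star}{\eta n^2}
+\tfrac{\eta^{q-1}\sigma_q^q n^{q-1}}{\mu^{q-1}}
+\tau_q^{p(1+\nu)/(p-1-\nu)}M_\nu\eta^{\theta_\nu}n^{\rho-1}.
\]
Balancing the first and third terms uses $1+\theta_\nu=p/(p-1-\nu)$. It gives
\[
\eta\asymp\mu^{(1+\nu)/p}D_\star^{(p-1-\nu)/p}\big/\bigl(\tau_q^{1+\nu}L_\nu n^{(1+\rho)(p-1-\nu)/p}\bigr).
\]
One checks that $(1+\rho)(p-1-\nu)/p=(2p-1-p\nu-\nu)/p=2-\kappa_{\rm acc}$, which matches the stated stepsize. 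Substituting back, the first term becomes $\tau_q^{1+\nu}L_\nu D_\star^{(1+\nu)/p}\mu^{-(1+\nu)/p}n^{-\kappa_{\rm acc}}$. Balancing the first and second terms gives $\eta^q\asymp\mu^{q-1}D_\star/(\sigma_q^q n^{q+1})$, that is, $\eta\asymp\mu^{1/p}D_\star^{(p-1)/p}/(\sigma_q n^{(2p-1)/p})$. This yields the term $\sigma_qD_\star^{1/p}\mu^{-1/p}n^{-1/p}$. Taking the minimum of the two stepsizes makes each term at most the larger of the two balanced values, because each positive term is monotone in $\eta$. The specialization to $h=\frac1p\norm{x-z^0}_p^p$ uses $\mu=2^{2-p}$ and $D_\star=R_p^p/p$.

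The main obstacle is the startup domination. Here $\sum_k a_k^q$ and $A_{n-1}$ both grow, while the $k=0$ term carries only $a_0=1$. I therefore need to verify that the condition $\eta\lesssim\mu^{(1+\nu)/p}D_\star^{(p-1-\nu)/p}/(\tau_qL_\nu)$ is implied by the chosen stepsize. It is, since $n^{2-\kappa_{\rm acc}}\ge1$ (note $\kappa_{\rm acc}\le 2$ because $p>2$).

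The other point to check is that the Young step still matches the weights: the positive coefficient grows like $k^q$ and the negative one like $A_{k-1}\asymp k^2$. The exponent algebra for $\rho$ must be verified, including that $k=1$ is fine even though $A_0=1$.
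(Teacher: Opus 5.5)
Your proposal is correct and follows the paper's proof essentially step for step. The shared steps are: the weights $a_k=k+1$; combining \Cref{lem:oda-regret} with \Cref{lem:soda-otb-holder}; the noise split, with the $k=0$ startup term absorbed by the deterministic stepsize cap; the pointwise Young bound with $A\propto k^q$ and $B\propto k^2$, which gives the exponent $p(1-\nu)/(p-1-\nu)$; and the two balances after dividing by $A_{n-1}=\Theta(n^2)$. A cosmetic correction: that exponent is nonnegative (it is zero at $\nu=1$) rather than positive, which is all that $\sum_{k=1}^{n-1}k^{\rho}\lesssim n^{1+\rho}$ requires.
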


\begin{proof}
Choose $a_k=k+1$, so that $A_k=(k+1)(k+2)/2$.
Combining \Cref{lem:oda-regret} with \Cref{lem:soda-otb-holder} and taking expectations gives
\[
\begin{aligned}
A_{n-1}\mathbb E[f(x^{n-1})-f(x^\star)]
&\le
\tfrac{D_\star}{\eta}
+
\tfrac{\eta^{q-1}}{q\mu^{q-1}}
\textstyle\sum_{k=0}^{n-1}(k+1)^q
\mathbb E\norm{g^k-g^{k-1}}_*^q
\\
&\qquad
-
\tfrac{c_\nu}{L_\nu^{1/\nu}}
\textstyle \sum_{k=1}^{n-1} A_{k-1}
\mathbb E\!\left[
\norm{\nabla f(y^k)-\nabla f(y^{k-1})}_*^{r_\nu}
\right]
\end{aligned}
\]
We next split the regret noise term.
For $k\ge1$, raising \Cref{ass:noise} to the $q$th power and using $(a+b)^q\lesssim_q a^q+b^q$ gives
\[
\mathbb E\norm{g^k-g^{k-1}}_*^q
\lesssim_q
\tau_q^q
\mathbb E\norm{\nabla f(y^k)-\nabla f(y^{k-1})}_*^q
+
\sigma_q^q .
\]
For $k=0$, using the boundary conventions $g^{-1}=0$ and $y^{-1}=x^\star$, \Cref{ass:noise} gives
\[
\mathbb E\norm{g^0-g^{-1}}_*^q
\lesssim_q
\tau_q^q
\mathbb E\norm{\nabla f(y^0)-\nabla f(x^\star)}_*^q
+
\sigma_q^q .
\]
Since $\sum_{k=0}^{n-1}(k+1)^q\lesssim_q n^{q+1}$, absorbing only $q$-dependent constants gives
\[
\begin{aligned}
A_{n-1}\mathbb E[f(x^{n-1})-f(x^\star)]
&\lesssim_q
\tfrac{D_\star}{\eta}
+
\tfrac{\eta^{q-1}\sigma_q^q n^{q+1}}{q\mu^{q-1}}
\\
&\qquad
+
\tfrac{\tau_q^q\eta^{q-1}}{q\mu^{q-1}}
\textstyle \sum_{k=1}^{n-1} (k+1)^q
\mathbb E\!\left[
\norm{\nabla f(y^k)-\nabla f(y^{k-1})}_*^q
\right]
\\
&\qquad
-
\tfrac{c_\nu}{L_\nu^{1/\nu}}
\textstyle \sum_{k=1}^{n-1} A_{k-1}
\mathbb E\!\left[
\norm{\nabla f(y^k)-\nabla f(y^{k-1})}_*^{r_\nu}
\right]
\\
&\qquad
+
\tfrac{\tau_q^q\eta^{q-1}}{q\mu^{q-1}}
\mathbb E\!\left[
\norm{\nabla f(y^0)-\nabla f(x^\star)}_*^q
\right].
\end{aligned}
\numberthis\label{eq:acc-holder-noise-split}
\]
The last term of \eqref{eq:acc-holder-noise-split} is dominated by the first term by the same argument as in the non-accelerated proof, since the deterministic branch of $\eta$ is again at most
\[
\tfrac{\mu^{(1+\nu)/p}D_\star^{(p-1-\nu)/p}}{\tau_q L_\nu}.
\]
It remains to control the positive and negative variation sums.
Since $(k+1)^q\lesssim_q k^q$ and $A_{k-1}\gtrsim k^2$ for $k\ge1$, it is enough to use the same scalar bound as in the non-accelerated proof:
\[
\sup_{s\ge0}\{A s^q-Bs^{r_\nu}\}
\le
C_{p,\nu} A^{r_\nu/(r_\nu-q)}B^{-q/(r_\nu-q)}
\qquad
\text{for all } A,B>0.
\]
Here $C_{p,\nu}$ is a finite constant depending only on $p$ and $\nu$.
Applying this bound with
\[
A=\tfrac{\tau_q^q\eta^{q-1}}{\mu^{q-1}}k^q,
\qquad
B=\tfrac{k^2}{L_\nu^{1/\nu}},
\]
yields
\[
\sup_{s\ge0}
\left\{
\tfrac{\tau_q^q\eta^{q-1}}{\mu^{q-1}}k^q s^q
-
\tfrac{k^2}{L_\nu^{1/\nu}}s^{r_\nu}
\right\}
\le
C_{p,\nu}
\tfrac{\tau_q^{p(1+\nu)/(p-1-\nu)}
L_\nu^{p/(p-1-\nu)}}{\mu^{(1+\nu)/(p-1-\nu)}}
\eta^{(1+\nu)/(p-1-\nu)}
k^{p(1-\nu)/(p-1-\nu)} .
\numberthis\label{eq:acc-holder-young-bound}
\]
The exponent of $k$ in \eqref{eq:acc-holder-young-bound} comes from
\[
q\tfrac{r_\nu}{r_\nu-q}
-
2\tfrac{q}{r_\nu-q}
=
\tfrac{p(1-\nu)}{p-1-\nu}.
\]
Define, as before,
\[
\theta_\nu:=\tfrac{1+\nu}{p-1-\nu},
\qquad
\xi_\nu:=\tfrac{p(1-\nu)}{p-1-\nu},
\qquad
M_\nu:=
\tfrac{L_\nu^{p/(p-1-\nu)}}
{\mu^{(1+\nu)/(p-1-\nu)}} .
\]
With this notation, \eqref{eq:acc-holder-young-bound} applies pointwise with
\[
s_k:=\norm{\nabla f(y^k)-\nabla f(y^{k-1})}_* .
\]
Combining \eqref{eq:acc-holder-young-bound} with $(k+1)^q\lesssim_q k^q$ and $A_{k-1}\gtrsim k^2$ gives, for each $k\ge1$,
\[
\begin{aligned}
\tfrac{\tau_q^q\eta^{q-1}}{q\mu^{q-1}}(k+1)^q
\mathbb E[s_k^q]
-
\tfrac{c_\nu}{L_\nu^{1/\nu}}A_{k-1}
\mathbb E[s_k^{r_\nu}]
\le
O_{p,\nu}\!\left(
\tau_q^{p(1+\nu)/(p-1-\nu)}
M_\nu\eta^{\theta_\nu}k^{\xi_\nu}
\right).
\end{aligned}
\]
Thus the positive and negative variation sums in \eqref{eq:acc-holder-noise-split} contribute at most the sum of these remainders.
Since
\[
\textstyle\sum_{k=1}^{n-1} k^{\xi_\nu}
\lesssim_{p,\nu}
n^{1+\xi_\nu},
\qquad
A_{n-1}=\Theta(n^2),
\]
dividing by $A_{n-1}$ gives
\begin{equation}
\label{eq:acc-holder-rate-before-balance}
\begin{aligned}
\mathbb E[f(x^{n-1})-f(x^\star)]
\le
O_{p,\nu}\!\left(
\tfrac{D_\star}{\eta n^2}
+
\tau_q^{p(1+\nu)/(p-1-\nu)}
M_\nu\eta^{\theta_\nu}n^{\xi_\nu-1}
+
\tfrac{\eta^{q-1}\sigma_q^q n^{q-1}}{\mu^{q-1}}
\right).
\end{aligned}
\end{equation}
Equating the deterministic and H\"older-variation terms in \eqref{eq:acc-holder-rate-before-balance} gives
\[
\tfrac{D_\star}{\eta n^2}
\asymp_{p,\nu}
\tau_q^{p(1+\nu)/(p-1-\nu)}
M_\nu\eta^{\theta_\nu}n^{\xi_\nu-1}
\quad\Longleftrightarrow\quad
\eta^{1+\theta_\nu}
\asymp_{p,\nu}
\tfrac{D_\star}
{\tau_q^{p(1+\nu)/(p-1-\nu)}M_\nu n^{1+\xi_\nu}}.
\]
Since
\[
1+\theta_\nu=\tfrac{p}{p-1-\nu},
\qquad
(1+\xi_\nu)\tfrac{p-1-\nu}{p}
=2-\kappa_{\rm acc},
\]
this yields
\[
\eta_{\rm det}
\asymp_{p,\nu}
\tfrac{\mu^{(1+\nu)/p}D_\star^{(p-1-\nu)/p}}
{\tau_q^{1+\nu}L_\nu n^{2-\kappa_{\rm acc}}},
\qquad
\kappa_{\rm acc}=\tfrac{1+(p+1)\nu}{p}.
\]
Substituting this value of $\eta_{\rm det}$ into $D_\star/(\eta n^2)$ gives the deterministic rate in the theorem.

Equating the deterministic and stochastic terms in \eqref{eq:acc-holder-rate-before-balance} gives
\[
\tfrac{D_\star}{\eta n^2}
\asymp_p
\tfrac{\eta^{q-1}\sigma_q^q n^{q-1}}{\mu^{q-1}},
\qquad\text{hence}\qquad
\eta^q
\asymp_p
\tfrac{\mu^{q-1}D_\star}{\sigma_q^q n^{q+1}}.
\]
Using $q=p/(p-1)$, this gives
\[
\eta_{\rm stoch}
\asymp_p
\tfrac{\mu^{1/p}D_\star^{(p-1)/p}}
{\sigma_q n^{(2p-1)/p}},
\]
and substitution into $D_\star/(\eta n^2)$ gives the stochastic rate $\sigma_qD_\star^{1/p}/(\mu^{1/p}n^{1/p})$.
\end{proof}

\begin{cor}[Accelerated oracle complexity with less heavy-tailed noise]
\label{cor:acc-minibatch-finite-p}
Let $p\ge2$, set $q=p/(p-1)$, and suppose $p\ge p_r$.
Suppose \Cref{ass:mean-zero-r-moment} holds.
Run the accelerated parameterization in \Cref{cor:soda-acc-uniform} with gradient feedback $g^k=\bar g_B(y^k)$.
Then \Cref{cor:soda-acc-uniform} applies with its noise parameter $\sigma_q$ replaced by $d^{1/q-1/r}\sigma_rB^{-1/p_r}$, giving
\[
\mathbb E[f(x^{n-1})-f(x^\star)]
\lesssim
\tfrac{A_p}{n^{1+2/p}}
+
\tfrac{S_{r,p}}{B^{1/p_r}n^{1/p}},
\]
where $R_p:=\norm{x^\star-z^0}_p$, $A_p=L_pR_p^2$, and $S_{r,p}:=\sigma_r d^{1/q-1/r}R_p$.
Hence accuracy $\epsilon$ is reached by taking
\[
n\asymp
\left(\tfrac{A_p}{\epsilon}\right)^{p/(p+2)},
\qquad
B\asymp
\left(\tfrac{S_{r,p}}{\epsilon}\right)^{p_r}
\left(\tfrac{\epsilon}{A_p}\right)^{p_r/(p+2)},
\]
and therefore with total stochastic gradient budget
\[
N=nB
=
O\!\left(
\left(\tfrac{A_p}{\epsilon}\right)^{p/(p+2)}
+
\left(\tfrac{S_{r,p}}{\epsilon}\right)^{p_r}
\left(\tfrac{A_p}{\epsilon}\right)^{(p-p_r)/(p+2)}
\right).
\]
Equivalently, for a fixed oracle budget $N=nB$,
\[
\epsilon^{\rm acc}_p(n,N)
\lesssim
\tfrac{A_p}{n^{1+2/p}}
+
\tfrac{S_{r,p}}{N^{1/p_r}}\,n^{1/p_r-1/p},
\qquad p\ge p_r .
\]
\end{cor}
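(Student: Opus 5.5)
The plan is to reduce everything to \Cref{cor:soda-acc-uniform} by checking that the mini-batch feedback $g^k=\bar g_B(y^k)$ satisfies \Cref{ass:soda-stochastic} and \Cref{ass:noise} with exponent $q=p/(p-1)$, $\tau_q=1$, and noise level $\sigma_q\lesssim_r d^{1/q-1/r}\sigma_rB^{-1/p_r}$. After that, the rest is algebra on the resulting bound.

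\textbf{Step 1: unbiasedness and the dual-norm noise bound.} Unbiasedness is immediate, because the $B$ samples at iteration $k$ are drawn fresh given $\mathcal F_{k-1}$ and each is unbiased by \Cref{ass:mean-zero-r-moment}. For the noise, the condition $p\ge p_r$ is equivalent to $q\le r$. I would then chain three estimates:
\begin{itemize}
\item The norm conversion \eqref{eq:q-r-norm-conversion} gives $\norm{\bar g_B-\nabla f}_q\le d^{1/q-1/r}\norm{\bar g_B-\nabla f}_r$.
\item Lyapunov/Jensen ($q\le r$) gives $\mathbb E[\norm{\cdot}_q^q]^{1/q}\le\mathbb E[\norm{\cdot}_q^r]^{1/r}$.
\item The von Bahr--Esseen bound cited before \Cref{prop:minibatch-finite-p} gives $\mathbb E[\norm{\bar g_B(y^k)-\nabla f(y^k)}_r^r\mid\mathcal F_{k-1}]^{1/r}\lesssim_r\sigma_rB^{-1/p_r}$.
\end{itemize}
This is the step I expect to be the main obstacle. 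The von Bahr--Esseen inequality in its classical form is scalar or Hilbertian. For an $\ell_r$ or Schatten-$r$ valued sum with $r\le 2$, I would invoke Pinelis's martingale version, which gives the $r$-smoothness (type-$r$) constant of $\ell_r$ and $S_r$ with a constant depending only on $r$. Once that conditional moment bound is in hand, \Cref{rem:gradvar} with Minkowski yields \Cref{ass:noise} with $\tau_q=1$ and $\sigma_q=2d^{1/q-1/r}C_r\sigma_rB^{-1/p_r}$. The conditioning is handled by the tower property, since $y^k$ is $\mathcal F_{k-1}$-measurable.

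\textbf{Step 2: the rate.} Applying \Cref{cor:soda-acc-uniform} with this $\sigma_q$ and $\mu=2^{2-p}$ gives directly
\[
\tfrac{L_pR_p^2}{n^{1+2/p}}+\tfrac{d^{1/q-1/r}\sigma_rR_p}{B^{1/p_r}n^{1/p}}=\tfrac{A_p}{n^{1+2/p}}+\tfrac{S_{r,p}}{B^{1/p_r}n^{1/p}}.
\]

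\textbf{Step 3: complexity.} To reach accuracy $\epsilon$, I make each term at most $\epsilon$:
\begin{itemize}
\item The first term gives $n\asymp(A_p/\epsilon)^{p/(p+2)}$.
\item The second term requires $B^{1/p_r}\gtrsim S_{r,p}/(\epsilon n^{1/p})$, i.e. $B\asymp(S_{r,p}/\epsilon)^{p_r}n^{-p_r/p}$.
\end{itemize}
Substituting $n^{-p_r/p}=(\epsilon/A_p)^{p_r/(p+2)}$ gives the stated $B$. Taking $B\ge1$ and multiplying, $N=nB$ yields the two-term bound, with exponent $p/(p+2)-p_r/(p+2)=(p-p_r)/(p+2)$.

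For fixed budget $N$, substituting $B=N/n$ turns the noise term into $S_{r,p}N^{-1/p_r}n^{1/p_r-1/p}$. This is the final display.
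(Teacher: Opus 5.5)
Your proposal is correct and follows essentially the same route as the paper, which likewise checks that the batched feedback satisfies the noise assumption with $\tau_q=1$ and scale $d^{1/q-1/r}\sigma_rB^{-1/p_r}$ (via von Bahr--Esseen/Pinelis and the norm comparison \eqref{eq:q-r-norm-conversion}), substitutes this scale into \Cref{cor:soda-acc-uniform}, and then does the same algebra for $n$, $B$, $N=nB$ and the fixed-budget form. Your write-up is more explicit than the paper's, since you spell out the Lyapunov step and the conditioning. In the vector case the step you flag as the main obstacle is simpler than you expect: applying the scalar von Bahr--Esseen inequality coordinatewise to $\norm{\cdot}_{\ell_r}^r$ already gives the bound with a dimension-free constant, and Pinelis's Banach-space version is only really needed for the Schatten case.
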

\begin{proof}
The mini-batch moment bound follows from the same von Bahr--Esseen and norm comparison argument used in \Cref{prop:minibatch-finite-p} stating that the averaged feedback has effective $q$-moment scale $d^{1/q-1/r}\sigma_rB^{-1/p_r}$.
Substituting this scale into \Cref{cor:soda-acc-uniform} gives the above rate.
The stated choices of $n$ and $B$ make the deterministic and stochastic terms at most constants times $\epsilon$, and the total stochastic gradient budget is $N=nB$.
Substituting $B=N/n$ gives the form where $N$ is fixed.
\end{proof}

\paragraph{Accelerated batch size rule for a fixed geometry}
The fixed oracle budget result in \Cref{cor:acc-minibatch-finite-p} also gives an accelerated analogue of the batch size rule in \eqref{eq:bs-scaling-rule}.
Balancing the smooth and stochastic terms in
\[
\epsilon^{\rm acc}_p(n,N)
\lesssim
\tfrac{A_p}{n^{1+2/p}}
+
\tfrac{S_{r,p}}{N^{1/p_r}}\,n^{1/p_r-1/p}
\]
gives
\[
\begin{aligned}
n_p^{\rm acc}
&\asymp
\left(\tfrac{A_p}{S_{r,p}}\right)^{1/(1+1/p_r+1/p)}
N^{\frac{1/p_r}{1+1/p_r+1/p}},
\\
B_p^{\rm acc}=\tfrac{N}{n_p^{\rm acc}}
&\asymp
\left(\tfrac{S_{r,p}}{A_p}\right)^{1/(1+1/p_r+1/p)}
N^{\frac{1+1/p}{1+1/p_r+1/p}}.
\end{aligned}
\]
At $p=p_r$, the stochastic term is independent of $n$ after substituting $B=N/n$, so this balance gives the largest batch size, or equivalently the smallest iteration count, that remains optimal up to constants.
For bounded variance, $p_r=2$, the batch exponent is
\[
B_p^{\rm acc}
\asymp
N^{\beta_{\rm acc}(p)},
\qquad
\beta_{\rm acc}(p)
=
\tfrac{2(p+1)}{3p+2},
\qquad
\beta_{\rm acc}(2)=\tfrac34,
\quad
\lim_{p\rightarrow\infty}\beta_{\rm acc}(p)=\tfrac23,
\]
which we plot in \Cref{fig:accelerated-batchsize-scaling}.
Thus, the accelerated parameterization allows taking larger batch sizes than the non-accelerated version (\textit{c.f.} \Cref{tab:bounded-var-batchsize}).
\begin{figure}[H]
\centering
\includegraphics[width=0.45\linewidth]{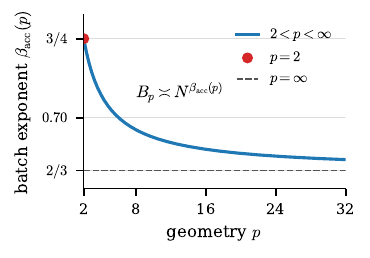}
\caption{Accelerated batch size scaling rule under bounded variance based on
\Cref{cor:acc-minibatch-finite-p}.}
\label{fig:accelerated-batchsize-scaling}
\end{figure}

\paragraph{Accelerated effective stepsize.}
\label{app:accelerated-stepsize-scale}

For the accelerated choice in \Cref{cor:soda-acc-uniform}, the corresponding horizon-free stepsize is obtained by replacing the horizon $n$ in $\eta$ by the current iteration $k$:
\[
\eta_k
\asymp
\min\left\{
C_{\rm det}k^{-(p-2)/p},
C_{\rm stoch}k^{-(2p-1)/p}
\right\}.
\]
Since $\gamma_k\asymp\eta_k k^2$, this gives
\[
\gamma_k^{q-1}
\asymp
\min\left\{
C_{\rm det}^{1/(p-1)}k^{(p+2)/(p(p-1))},
C_{\rm stoch}^{1/(p-1)}k^{1/(p(p-1))}
\right\}.
\]
The effective scale $\lambda_k\gamma_k^{q-1}$, with $\lambda_k\asymp k^{-1}$, therefore has endpoints
\[
p=2:\qquad
\lambda_k\gamma_k^{q-1}
\asymp
\min\left\{
C_{\rm det}k,
C_{\rm stoch}k^{-1/2}
\right\},
\qquad
p\rightarrow\infty:\qquad
\lambda_k\gamma_k^{q-1}
\asymp
k^{-1}.
\]